\newtheorem{prop}{Proposition}
\newtheorem{problem}{Problem}
\begin{document}

\title{ Latent Complete Row Space Recovery for Multi-view Subspace Clustering}

\author{Hong~Tao, Chenping~Hou*,~\IEEEmembership{Member,~IEEE}, Yuhua Qian*, Jubo~Zhu, Dongyun~Yi
\thanks{This work was supported by the NSF of China under Grant 61922087 and Grant 61906201, and the NSF for Distinguished Young Scholars of Hunan Province under Grant 2019JJ20020. Chenping Hou and Yuhua Qian are the corresponding authors.}
\thanks{Hong Tao, Chenping Hou, Jubo Zhu and Dongyun Yi are with the Department of Systems Science, College of Science, National University of Defense Technology, Changsha, 410073, Hunan, China (e-mail: taohong.nudt@hotmail.com, hcpnudt@hotmail.com, ju\_bo\_zhu@aliyun.com, dongyun.yi@gmail.com).}
\thanks{Yuhua Qian is with the Institute of Big Data Science and Industry, Shanxi University, Taiyuan, 030006, Shanxi, China (e-mail: jinchengqyh@126.com).}
}

\markboth{}%
{Tao \MakeLowercase{\textit{et al.}}: Latent Complete Row Space Recovery for Multi-view Subspace Clustering}

\IEEEtitleabstractindextext{
\begin{abstract}
Multi-view subspace clustering has been applied to applications such as image processing and video surveillance, and has attracted increasing attention.
Most existing methods learn view-specific self-representation matrices, and construct a combined affinity matrix from multiple views.
The affinity construction process is time-consuming, and the combined affinity matrix is not guaranteed to reflect the whole true subspace structure.
To overcome these issues, the Latent Complete Row Space Recovery (LCRSR) method is proposed.
Concretely, LCRSR is based on the assumption that the multi-view observations are generated from an underlying latent representation, which is further assumed to collect the authentic samples drawn exactly from multiple subspaces.
LCRSR is able to recover the row space of the latent representation, which not only carries complete information from multiple views but also determines the subspace membership under certain conditions.
LCRSR does not involve the graph construction procedure and is solved with an efficient and convergent algorithm, thereby being more scalable to large-scale datasets.
The effectiveness and efficiency of LCRSR are validated by clustering various kinds of multi-view data and illustrated in the background subtraction task.

\end{abstract}

\begin{IEEEkeywords}
Multi-view clustering,  subspace clustering, latent representation, row space recovery
\end{IEEEkeywords}
}

\maketitle

\IEEEdisplaynontitleabstractindextext

\section{Introduction}
In many real-world applications, multi-view data are produced increasingly.
For example, in multi-camera video surveillance, multi-camera networks  record human activities, where each camera corresponds to a view.
For the sake of security, a growing number of multi-camera networks are deployed.
Driven by the requirements to analyzing these multi-view data, multi-view learning was proposed \cite{blum1998cotraining,Bickel2004MvC} and has experienced fast development in recent years \cite{White2012CMSL,xu2013MVLsurvey,Guo2013CSRL,Xu2015Intact,Wang2015consensus,Hou2017MUFE,2017WangLapLRR,Zhang2018gLMSC,Cheng2019Tensor,Wu2019essentialTensor}.
Based on how much label information is used, multi-view learning can be roughly categorized into multi-view supervised learning, multi-view semi-supervised learning and multi-view unsupervised learning.
Since labeling samples is expensive in both time and energy, it is unrealistic to obtain labels for all samples in some applications, e.g., wild face recognition in multi-camera video surveillance.
To overcome this barrier, multi-view unsupervised learning which explores the intrinsic structures of multi-view data using no label information has received increasing attention.

As a typical task in multi-view unsupervised learning, the goal of multi-view subspace clustering is to partition the multi-view instances that are approximately drawn from the same subspace into the same cluster.
Up to present, various multi-view subspace clustering algorithms with promising performance have been developed \cite{Wang2015consensus,zhang2015LTMSC,cao2015DiMSC,Ding2016RMSL,Gao2015MVSC,Luo2018CSMSC,Yin2018SCMV3D,wang2017ECRMSC,Zhang2017LMSC}.
Most of the existing multi-view subspace clustering algorithms are based on Low-Rank Representation (LRR) \cite{Liu2013LRR} or Sparse Subspace Clustering (SSC) \cite{2013SSC}.
Both LRR and SSC construct the affinity matrix by learning the self-representation coefficient matrix of the self-representation model.
The main difference between them is that LRR seeks the lowest-rank self-representation while SSC wants the sparsest one.
Provided that the data points are sufficient and are drawn from independent subspaces, the learned self-representation matrix will be approximately block diagonal and each block corresponds to a cluster  \cite{Lu2019BDR}.
After forming an affinity matrix that encodes the subspace membership, onto which one can obtain the final clustering results by performing standard spectral clustering algorithms, such as Normalized Cut (NCut) \cite{shi2000Ncut}.
The ideas of LRR and SSC are inherited and extended to the multi-view setting.
Existing multi-view subspace clustering approaches learn affinity matrices from multiple views and combine them together to form the final affinity matrix.
The main difference among them is that they learn affinity matrices with distinct models on the relationship of multiple views (refer to the next section for a brief review).
Albeit these methods have appealing performance, most of them learn view-specific self-representations.
Practically, each view only contains partial information about the data \cite{Xu2015Intact,Zhang2017LMSC}.
Thus, learning view-specific self-representations within each view makes data being partially described.
Recognizing this will impair the representation capability of the final affinity matrix, Zhang et al. \cite{Zhang2017LMSC} proposed to learn the self-representation matrix of the underlying latent representation of multiple views.
However, whether the learned latent representation contains complete information from multiple views was not analyzed in \cite{Zhang2017LMSC}.
Thus, the information completeness of the final affinity matrix still remains not guaranteed.
In addition, regarding the time complexity, almost all of existing multi-view subspace clustering methods have a cubic time complexity with respect to (w.r.t.) the number of samples.
This limits their applicability to large-scale problems.

To alleviate the above mentioned problems, in this paper, we propose the Latent Complete Row Space Recovery (LCRSR) method for multi-view subspace clustering.
Concretely, we assume that multiple views are generated from a shared latent representation, which is further assumed to store the samples drawn from a union of multiple subspaces.
Since the row space of data is proved to be able to decide the subspace membership \cite{Liu2013LRR}, under the above assumptions, data samples can be well grouped provided that the row space of the latent representation is recovered.
Meanwhile, considering that the data matrix of each view could be contaminated by gross errors, we aim to simultaneously recover the row space of the latent representation (called latent row space) and the possible errors existed in data's multiple views.
To achieve this goal, we establish the LCRSR model to explore the relationships between data's multiple observations, possible errors and the latent row space.
According to the LCRSR model, the recovered latent row space contains the row spaces of each view as subspaces, thereby carrying complete information about the samples.
To solve the LCRSR model, an iterative alternating algorithm (also referred to as LCRSR) with proved convergence is developed.
When the latent representation assumption completely holds and the true rank of the latent representation is known, we prove that the LCRSR algorithm can restore the latent complete row space exactly.
Once the latent complete row space is restored, the final clustering partition can be obtained by simply applying K-Means onto it \cite{Liu2018RSP}.

Compared with the previous multi-view subspace clustering methods, the proposed LCRSR has two advantages.
On one hand, LCRSR directly recovers the latent complete row space of multi-view data. With the data being sufficiently described by the latent complete representation, LCRSR can achieve better clustering performance.
On the other hand, LCRSR avoids the computationally expensive graph construction process and the subsequent spectral clustering step.
This leads to a less computational complexity, i.e., being quadratic to the number of samples and linear to the dimensionality.
The contributions of this paper are summarized as follows.

\begin{itemize}
  \item Under the assumptions of this paper, the Latent Complete Row Space Recovery model is established to identity the possible errors and restore the row space of the latent complete representation.
  \item Preliminary theoretical analysis about the recovery ability of LCRSR are provided. When the latent representation assumption holds, the latent complete row space can be recovered exactly.
  \item An efficient and convergent algorithm is devised for optimization, reducing the computational complexity with respect to sample size from cubic to quadratic time. Thus, LCRSR is more scalable to large-scale datasets.
  \item The performance of LCRSR is evaluated via subspace recovery on synthetic data, multi-view subspace clustering on various real-world datasets, and background subtraction in multi-camera video surveillance.
      Experimental results demonstrate the effectiveness and efficiency of LCRSR.
\end{itemize}

The remainder of this paper is organized as follows.
Section \ref{nota_sec_RW} introduces the notations and reviews some prevalent multi-view subspace clustering methods.
Section \ref{sec_prob_state} states the problem to be solved and some analysis.
Section \ref{sec_MvRSR} presents the proposed LCRSR method, followed by its optimization in Section \ref{sec_opt}.
The experimental results are displayed in Section \ref{sec_exp}.
Finally, conclusions are made in Section \ref{sec_summary}.

\section{Notations and Related Work}\label{nota_sec_RW}

\subsection{Notations}
In this paper, boldface uppercase/lowercase letters are used to notate matrices/vectors.
Particularly, we use $\mathbf{I}$ and $\mathbf{1}$ to denoted the identity matrix and the vector with all ones.
$m_{ij}$ or $\mathbf{M}_{ij}$ denotes the $(i, j)$-th element of $\mathbf{M}$.
$|\mathbf{M}|$ outputs a matrix with its $(i,j)$-th element being $|m_{ij}|$.
$Tr(\mathbf{M})$ and $\mathbf{M}^T$ are the trace and transpose of $\mathbf{M}$, respectively.
The Frobenius norm of a matrix $\mathbf{M}$ or $\ell_2$ norm of a vector $\mathbf{m}$ is denoted by $\|\mathbf{M}\|$ or $\|\mathbf{m}\|$.
The nuclear norm (the sum of singular values) and the spectral norm (the largest singular value) of $\mathbf{M}$ are denoted by $\|\mathbf{M}\|_*$  and  $\|\mathbf{M}\|_2$, respectively.
The $\ell_1$ and $\ell_{2,1}$ norms of matrices are defined by $\|\mathbf{M}\|_1 = \sum\nolimits_{ij} |m_{ij}|$ and $\|\mathbf{M}\|_{2,1} = \sum\nolimits_j \|\mathbf{M}_{:,j}\|$, where $\mathbf{M}_{:,j}$ denotes the $j$-th column of $\mathbf{M}$.
For presentational convenience, we use $[\mathbf{M}_1, \cdots, \mathbf{M}_k]$ ($[\mathbf{M}_1; \cdots; \mathbf{M}_k]$) to represent the matrix formed by concatenating $\{\mathbf{M}_i\}_{i=1}^k$ along the horizontal (vertical) direction.
For a rank $r$ matrix $\mathbf{M}\in \mathbb{R}^{m\times n}$, the compact Singular Value Decomposition (thin SVD) is defined as $\mathbf{M} = \mathbf{U}\boldsymbol{\Sigma} \mathbf{V}^T = \sum\limits_{i=1}^r \sigma_i \mathbf{U}_{:,i}(\mathbf{V}_{:,i})^T$, where $\mathbf{U}\in \mathbb{R}^{m\times r}$ and $\mathbf{V}\in \mathbb{R}^{n\times r}$ have orthonormal columns and $\boldsymbol{\Sigma} = diag(\sigma_1,\cdots, \sigma_r)$ with $\sigma_1\geq\cdots\geq \sigma_r >0$, where $diag(\mathbf{m})$ denotes the operation that forms a diagonal matrix by putting the elements of $\mathbf{m}$ on the main diagonal.

Given a set of $n$ instances $\{\mathbf{x}_i\}_{i=1}^n \subset \mathbb{R}^d$, let $\mathbf{X} = [\mathbf{x}_1, \cdots, \mathbf{x}_n] \in \mathbb{R}^{d \times n}$ be the data matrix.
If the instances have $V$ different representations or views, then we denote $\mathbf{x}_i = [\mathbf{x}_i^{(1)}; \cdots; \mathbf{x}_i^{(V)}]$, where $\mathbf{x}_i^{(v)} \in \mathbb{R}^{d^{(v)}}$ and $d = \sum \limits_{v=1}^V d^{(v)}$.
Let $\mathbf{X}^{(v)} = [\mathbf{x}_1^{(v)}, \cdots, \mathbf{x}_n^{(v)}] \in \mathbb{R}^{d^{(v)} \times n}$ store the observations on the $v$-th view, then we have $\mathbf{X} = [\mathbf{X}^{(1)}; \cdots; \mathbf{X}^{(V)}] \in \mathbb{R}^{d \times n}$.

\subsection{Related Work}\label{sec_RW}

Most of existing multi-view subspace clustering algorithms are based on the self-representation model, that is, samples from a union of subspaces can be linearly represented by themselves.
Formally, the self-representation model is written as
\begin{equation}\label{self}
    \mathbf{X} = \mathbf{X}\mathbf{Z} + \mathbf{E},
\end{equation}
where $\mathbf{X} \in \mathbb{R}^{d\times n}$ is the data matrix,  $\mathbf{Z} \in \mathbb{R}^{n\times n}$ is the self-representation coefficient matrix, and $\mathbf{E} \in \mathbb{R}^{d\times n}$ is the error matrix.
Then, the general formulation of multi-view subspace clustering methods is
\begin{equation}\label{general}
\begin{array}{cl}
    \min\limits_{\{\mathbf{Z}^{(v)}\}_{v=1}^V} & \sum\limits_{v=1}^V \mathcal{L}(\mathbf{E}^{(v)}) + \lambda \sum\limits_{v=1}^V \Omega(\mathbf{Z}^{(v)}) \\
    \text{s.t.} & \mathbf{X}^{(v)} = \mathbf{X}^{(v)}\mathbf{Z}^{(v)} + \mathbf{E}^{(v)}
\end{array}
\end{equation}
where $\mathbf{X}^{(v)} \in \mathbb{R}^{d^{(v)} \times n}$ and $\mathbf{Z}^{(v)}\in \mathbb{R}^{n \times n}$ are the data matrix and self-representation coefficient matrix of the $v$th view, $\mathcal{L}(\cdot)$ and $\Omega(\cdot)$ are the loss function and regularization term, respectively, and $\lambda > 0$ is the trade-off parameter.
Since the subspace membership is encoded in the learned representation matrix $\{\mathbf{Z}^{(v)}\}_{v=1}^V$, the final segmentation can be obtained by performing spectral clustering algorithms onto the affinity matrix $\mathbf{S} = \sum\limits_{v=1}^V |\mathbf{Z}^{(v)}| + |\mathbf{Z}^{(v)}|^T$.

Various multi-view subspace clustering methods differ in their different choices of loss functions and regularization terms.
The loss function is usually determined by the error types, e.g., $\ell_2$ loss for white noise, $\ell_1$ loss for random corruptions and $\ell_{2,1}$  loss for sample-specific outliers.
Compared with the generality of the loss term, it is the regularization term that characterizes the essential difference between various methods.
For example, to better exploit the complementary information among multiple views, Cao et al. \cite{cao2015DiMSC} employed the Hilbert Schmidt Independence Criterion (HSIC) \cite{gretton2005HSIC} to diversify the self-representation matrices and proposed the Diversity-induced Multi-view Subspace Clustering (DiMSC).
Except for the complementary information, the shared information among views is also of great importance.
Recognizing this, Luo et al. \cite{Luo2018CSMSC} proposed the Consistent and Specific Multi-View Subspace Clustering (CSMSC) by encoding the shared information and the view-unique information into a consistent representation matrix and view-specific representation matrices, respectively.
In order to explore the higher order connection among multi-view representations, the Low-rank Tensor constrained Multiview Subspace Clustering (LTMSC) \cite{zhang2015LTMSC} merges the self-representation matrices into a 3-order tensor and minimizes the tensor rank.
Assuming the samples coming from a union of affine subspaces, Gao et al. \cite{Gao2015MVSC} performed Multi-View Subspace Clustering (MVSC) by constraining the sum of each column of the self-representation matrix to be 1.
Noting that these methods learn self-representation matrices within each view where the data is partially described, Zhang et al. \cite{Zhang2017LMSC} proposed to learn the self-representation matrix of the underlying multi-view latent representation, and the resultant method is called Latent Multi-view Subspace Clustering (LMSC).
However, whether the learned latent representation carries the complete information from multiple views was not analyzed in \cite{Zhang2017LMSC}.
Therefore, whether the data are sufficiently described remains unclear.
Besides, the model of LMSC is actually built on the concatenation of multiple views, possibly leading to the interaction among multiple views being not fully explored and exploited.
Unlike the above methods that only focus on the low-rankness of the representation matrices, several approaches that seek the sparsest and the lowest-rank representations have been proposed \cite{brbic2018MLRSSC,Yin2018SCMV3D}.
For example, Brbi\'{c} and Kopriva \cite{brbic2018MLRSSC} proposed the Multi-view Low-rank Sparse Subspace Clustering (MLRSSC) method via matrix form data, while Yin et al. \cite{Yin2018SCMV3D} aimed to learn a sparse and low-rank representation tenosr for the constructed multi-view tensorial data.

\section{Problem Statement and Analysis}\label{sec_prob_state}

In this paper, we assume that the multiple views of $\{\mathbf{x}_i\}_{i=1}^n$ are generated from a shared latent representation $\mathbf{L}_0 \in \mathbb{R}^{m\times n}$, where $m$ is the dimension of the latent representation.
Formally, this assumption can be described as $\mathbf{X}^{(v)} = \mathbf{G}^{(v)}\mathbf{L}_0 + \mathbf{S}_0^{(v)}$, where $\mathbf{G}^{(v)} \in \mathbb{R}^{d^{(v)}\times m}$ is the transformation matrix, and $\mathbf{S}_0^{(v)} \in \mathbb{R}^{d^{(v)} \times n} $ denotes the errors.
Here, the ``error'' means the deviation between the model assumption and the observed data.
In practice, there are several kinds of errors, such as white noise, missing entries, outliers and corruptions \cite{Liu2018RSP}.
In this paper, the errors is assumed to be gross corruptions \cite{candes2011RPCA}.
That is, the values in $|\mathbf{S}_0^{(v)}|$ ($\forall v$) are elementally sparse and can be arbitrarily large.

Further, in the latent space, the given multi-view samples are assumed to be essentially drawn from a union of multiple subspaces.
That is, $\mathbf{L}_0 $ collects a set of $n$ samples which are exactly drawn from a union of multiple subspaces.
Denote the compact SVD of $\mathbf{L}_0$ as   $\mathbf{U}_0\boldsymbol{\Sigma}_0 \mathbf{V}_0^T$.
As analyzed in \cite{Liu2013LRR,Liu2018RSP,Costeira1998SIM, NIPS2013_SSCLRR}, under certain conditions, the subspace membership of the samples can be determined by the row space of $\mathbf{L}_0$, i.e., $\mathbf{V}_0$, or the Shape Interaction Matrix (SIM) $\mathbf{V}_0\mathbf{V}_0^T$ \cite{Costeira1998SIM} as equal.
Since $\mathbf{L}_0$ is the latent representation of $\{\mathbf{X}^{(v)}\}_{v=1}^V$, the row space of $\mathbf{L}_0$ is called as the latent row space.
With the above hypotheses, it is tacitly assumed that $\mathbf{L}_0$ is the lowest-rank representation that contains the full information from $\{\mathbf{X}_0^{(v)}\}_{v=1}^V$, where $\mathbf{X}_0^{(v)} = \mathbf{G}^{(v)}\mathbf{L}_0 (\forall v)$.

Hence, under the assumptions of this paper, the problem we want to solve can be rewritten mathematically as follows.

\begin{problem}\label{prob1}
Given $n$ samples, which are exactly drawn from a union of multiple subspaces, their multiple observations $\{\mathbf{X}^{(v)}\}_{v=1}^V$ are generated by $\mathbf{X}^{(v)} = \mathbf{G}^{(v)}\mathbf{L}_0 + \mathbf{S}_0^{(v)}$ ($\forall v$), where $\mathbf{L}_0 \in \mathbb{R}^{m\times n}$ is their representation in $\mathbb{R}^m$, and $\mathbf{S}_0^{(v)}$ represents the possible errors and is element-wisely sparse.
Suppose the rank and the compact SVD of $\mathbf{L}_0$ are $r_0$ and $\mathbf{U}_0\boldsymbol{\Sigma}_0 \mathbf{V}_0^T$, respectively.
Given $\{\mathbf{X}^{(v)}\}_{v=1}^V$, the aim is to recover the latent row space identified by $\mathbf{V}_0\mathbf{V}_0^T$, and correct the possible errors $\{\mathbf{S}_0^{(v)}\}_{v=1}^V$.
\end{problem}

Since $\mathbf{L}_0$ is low-rank and $\mathbf{S}_0^{(v)} (\forall v)$ is sparse, to address the above problem, a straightforward solution is
\begin{equation}\label{org}
\begin{array}{cl}
    \min\limits_{\substack{\mathbf{L},\{\mathbf{S}^{(v)}\}_{v=1}^V, \\ \{\mathbf{G}^{(v)}\in \mathbb{R}^{d^{(v)}\times m}\}_{v=1}^V}} & rank(\mathbf{L}) + \lambda \sum\limits_{v=1}^V \|\mathbf{S}^{(v)}\|_0 \\
    \text{s.t.} & \mathbf{X}^{(v)} = \mathbf{G}^{(v)}\mathbf{L} + \mathbf{S}^{(v)}, \forall v
    \end{array}
\end{equation}
where $\lambda > 0$ is a parameter, and $m$ is taken as a parameter to be given in advance. With the estimated $\hat{\mathbf{L}}_0$, the latent row space is obtained by performing SVD on it.

Though the non-convex formulation (\ref{org}) accurately describes Problem \ref{prob1}, the obtained $\hat{\mathbf{L}}_0$ could be arbitrarily close to zero, because scaling $\hat{\mathbf{L}}_0/c$ and $c\hat{\mathbf{G}}^{(v)}$ ($c > 0$ is a constant) has the same loss, where $\hat{\mathbf{G}}^{(v)}$ denotes the solution to ${\mathbf{G}}^{(v)}$ in (\ref{org}).
To avoid the arbitrary scaling of $\mathbf{L}$, one may consider add the constraint that $(\mathbf{G}^{(v)})^T\mathbf{G}^{(v)} = \mathbf{I}$ $(\forall v)$.
Denote the compact SVD of $\mathbf{L}$ as $\mathbf{U}\boldsymbol{\Sigma} \mathbf{V}^T$, then $\mathbf{G}^{(v)}\mathbf{L} = \mathbf{G}^{(v)}\mathbf{U}\boldsymbol{\Sigma} \mathbf{V}^T $.
As $(\mathbf{G}^{(v)}\mathbf{U})^T(\mathbf{G}^{(v)}\mathbf{U}) =\mathbf{I}$ and $\mathbf{V}^T \mathbf{V} = \mathbf{I}$, we know that
$ (\mathbf{G}^{(v)}\mathbf{U})\boldsymbol{\Sigma} \mathbf{V}^T$ is actually an SVD of $\mathbf{G}^{(v)}\mathbf{L}$ ($\forall v$).
Thus, $rank(\mathbf{G}\mathbf{L}) = rank(\mathbf{G}^{(v)}\mathbf{L}) = rank(\mathbf{L})$ $(\forall v)$, and problem (\ref{org}) is equivalent to
\begin{equation}\label{org2}
\begin{array}{cl}
    \min\limits_{\mathbf{L},\mathbf{S}, \mathbf{G}\in \mathbb{R}^{d \times m} } & rank(\mathbf{G}\mathbf{L}) + \lambda  \|\mathbf{S}\|_0 \\
    \text{s.t.} & \mathbf{X} = \mathbf{G}\mathbf{L} + \mathbf{S}\\
    & (\mathbf{G}^{(v)})^T\mathbf{G}^{(v)} = \mathbf{I}, \forall v
    \end{array}
\end{equation}
where $\mathbf{X} = [\mathbf{X}^{(1)}; \cdots; \mathbf{X}^{(V)}]$ is the concatenated feature from multiple views,  $\mathbf{G} = [\mathbf{G}^{(1)}; \cdots; \mathbf{G}^{(V)}]$, and  $\mathbf{S} = [\mathbf{S}^{(1)}; \cdots; \mathbf{S}^{(V)}]$.
Replacing $\mathbf{G}\mathbf{L}$ with $\tilde{\mathbf{L}}$, problem (\ref{org2}) is actually the problem considered by Robust Principle Component Analysis (RPCA) \cite{candes2011RPCA}, i.e.,
\begin{equation}\label{org3}
    \min\limits_{\tilde{\mathbf{L}},\mathbf{S}} ~~ rank(\tilde{\mathbf{L}}) + \lambda  \|\mathbf{S}\|_0 ~~~~
    \text{s.t.} ~~ \mathbf{X} = \tilde{\mathbf{L}} + \mathbf{S}.
\end{equation}

Under some assumptions, the solution of problem (\ref{org3}) is proved to be recovered with high probability by solving \cite{candes2011RPCA}
\begin{equation}\label{PCP}
    \min\limits_{\tilde{\mathbf{L}},\mathbf{S}} ~~ \|\tilde{\mathbf{L}}\|_* + \lambda  \|\mathbf{S}\|_1 ~~~~
    \text{s.t.} ~~ \mathbf{X} = \tilde{\mathbf{L}} + \mathbf{S}.
\end{equation}

In spite of this perfect property, the applicability of (\ref{org3}) is limited, since the condition that $(\mathbf{G}^{(v)})^T\mathbf{G}^{(v)} = \mathbf{I}$ $(\forall v)$ is usually easy to be violated in practice.
Denote $g^{(v)}(\mathbf{L}) = \mathbf{G}^{(v)} \mathbf{L} $.
Actually, according to the ``view insufficiency'' assumption \cite{Xu2015Intact} that each view only carries partial information, the function $g^{(v)}(\mathbf{L})$ is non-invertible.
The non-invertibility of $g^{(v)}(\mathbf{L})$ implies that $\mathbf{G}^{(v)}$ is not column full-rank, then $(\mathbf{G}^{(v)})^T\mathbf{G}^{(v)} = \mathbf{I}$ $(\forall v)$ does not hold.

Hence, to solve Problem \ref{prob1} better, we would like to design a new method, which is called  Latent Complete Row Space Recovery (LCRSR) in next section.

\section{Latent Complete Row Space Recovery for Multi-view Subspace Clustering}\label{sec_MvRSR}
In this section, we present the proposed LCRSR method for more effective and efficient multi-view subspace clustering.

\subsection{Latent Complete  Row Space Recovery}

Denote the rank and the compact SVD of $\mathbf{L}_0$ as $r_0$ and $\mathbf{U}_0\boldsymbol{\Sigma}_0 \mathbf{V}_0^T$, respectively.
Recall that our goal is to obtain $\mathbf{V}_0$ and $\{\mathbf{S}_0^{(v)}\}_{v=1}^V$ with the given $\{\mathbf{X}^{(v)}\}_{v=1}^V$ alone.
To realize this goal, we must establish a relationship between them.
Note that $\mathbf{L}_0 = \mathbf{U}_0\boldsymbol{\Sigma}_0 \mathbf{V}_0^T$ and $\mathbf{V}_0^T(\mathbf{I} - \mathbf{V}_0\mathbf{V}_0^T) = 0$, where the second equality uses the fact that $\mathbf{V}_0^T\mathbf{V}_0 = \mathbf{I}$.
Thus, we can eliminate $\mathbf{L}_0$ via multiplying it by $\mathbf{I} - \mathbf{V}_0\mathbf{V}_0^T$ from the right side \cite{Liu2018RSP}.
That is,
\begin{equation}\label{rowspace}
\mathbf{L}_0(\mathbf{I} - \mathbf{V}_0\mathbf{V}_0^T) = \mathbf{U}_0\boldsymbol{\Sigma}_0 \mathbf{V}_0^T(\mathbf{I} - \mathbf{V}_0\mathbf{V}_0^T) = 0.
\end{equation}
Recall that $\mathbf{X}^{(v)} = \mathbf{G}^{(v)}\mathbf{L}_0 + \mathbf{S}_0^{(v)}$. Then, the relationship between $\mathbf{X}^{(v)}$, $\mathbf{S}_0^{(v)}$ and $\mathbf{V}_0$ is constructed as
\begin{equation}\label{singleRSR}
(\mathbf{X}^{(v)} - \mathbf{S}_0^{(v)})(\mathbf{I} - \mathbf{V}_0\mathbf{V}_0^T) = \mathbf{G}^{(v)}\mathbf{L}_0(\mathbf{I} - \mathbf{V}_0\mathbf{V}_0^T) = 0.
\end{equation}

Denote $\mathbf{X}^{(v)}_0 = \mathbf{X}^{(v)} - \mathbf{S}_0^{(v)} = \mathbf{G}^{(v)}\mathbf{L}_0$ and its compact SVD as $\mathbf{U}_0^{(v)}\boldsymbol{\Sigma}_0^{(v)} (\mathbf{V}_0^{(v)})^T$.
It can be seen that $rank(\mathbf{X}^{(v)}_0) = rank(\mathbf{G}^{(v)}\mathbf{L}_0) \leq rank(\mathbf{L}_0)$.
Moreover, from $\mathbf{X}^{(v)}_0(\mathbf{I} - \mathbf{V}_0\mathbf{V}_0^T) = \mathbf{U}_0^{(v)}\boldsymbol{\Sigma}_0^{(v)} (\mathbf{V}_0^{(v)})^T(\mathbf{I} - \mathbf{V}_0\mathbf{V}_0^T) = 0$, we know that $(\mathbf{V}_0^{(v)})^T(\mathbf{I} - \mathbf{V}_0\mathbf{V}_0^T) = 0$.
That is, the row space of $\mathbf{X}^{(v)}_0$ ($\forall v$) is a subspace of $\mathbf{V}_0$.
In other words, $\mathbf{V}_0$ contains complete information from all views, and is called the latent complete row space.

Based on the above relationship model, the formulation of LCRSR is formulated as
\begin{equation}\label{MvRSP_org}
\begin{array}{cl}
    \min\limits_{\substack{\mathbf{V}\in \mathbb{R}^{n\times r},  \mathbf{V}^T\mathbf{V} = \mathbf{I}, \\ \mathbf{S}^{(v)} \in \mathbb{R}^{d^{(v)} \times n}}} & \sum\limits_{v=1}^V \|\mathbf{S}^{(v)}\|_0 \\
    \text{s.t.} & (\mathbf{X}^{(v)} - \mathbf{S}^{(v)})(\mathbf{I} - \mathbf{V}\mathbf{V}^T) = 0, \forall v
    \end{array}
\end{equation}
where $r$ ($r_0 \le r < \min\{m, n\}$) is a parameter.
When the true rank is known in advance, i.e., $r = r_0$, (\ref{MvRSP_org}) achieves the goal of (\ref{org}) while avoids pushing the obtained $\hat{\mathbf{L}}_0$ to be arbitrarily close to zero.

As a common practice in $\ell_0$ norm minimization problems, we replace the $\ell_0$ norm with the $\ell_1$ norm, leading to
the following formulation:
\begin{equation}\label{MvRSR}
\begin{array}{cl}
\min\limits_{\substack{\mathbf{V}\in \mathbb{R}^{n\times r},  \mathbf{V}^T\mathbf{V} = \mathbf{I}, \\ \mathbf{S}^{(v)} \in \mathbb{R}^{d^{(v)} \times n}}} & \sum \limits_{v=1}^V \|\mathbf{S}^{(v)}\|_1 \\
\text{s.t.} & (\mathbf{X}^{(v)} - \mathbf{S}^{(v)})(\mathbf{I} - \mathbf{V}\mathbf{V}^T) = 0, \forall v.\\
\end{array}
\end{equation}

It is worth noting that LCRSR aims to learn the latent complete row space by fixing the rank explicitly, i.e., $\mathbf{V}\in \mathbb{R}^{n\times r}$ and $r$ is fixed a \textit{priori}.
The reasonability of this fixed-rank parameterization can be explained as follows.
First, in some applications, the rank of the low-rank part is known in advance.
For instance, in motion segmentation, the feature trajectories of each video can be approximately modeled as samples drawn from a union of linear subspaces of dimension at most 4 \cite{motion4d}.
For another instance, under the Lambertian assumption, the face images of subjects approximately lie in a union of 9-dimensional linear subspaces \cite{face9d}.
Second, in methods (e.g., CSMSC\cite{Luo2018CSMSC}, and LMSC \cite{Zhang2017LMSC}) that minimize the rank of the representation matrix in the objective, the varying of the trade-off parameter will change the rank of the solution.
For example, increasing the $\lambda$ in (\ref{org2}) results in lower $\|\mathbf{S}\|_{0}$ but higher rank of $\mathbf{GL}$.
Using a fixed-rank in LCRSR avoids this kind of trade-off.

\subsection{Subspace Recovery Property of LCRSR}
We first consider the ``ideal'' case that there are no gross errors in the representations of multiple views, i.e., $\mathbf{X}^{(v)} = \mathbf{X}_0^{(v)} = \mathbf{G}^{(v)}\mathbf{L}_0$ ($\forall v$).
In this case, problems (\ref{MvRSP_org}) and (\ref{MvRSR}) are equivalent and will reduce to solving the following equation with $\mathbf{P}$ being the unknown variable:
\begin{equation}\label{clean1}
    \mathbf{X}^{(v)}(\mathbf{I} - \mathbf{P}) = 0, \forall v,  ~~~\text{s.t.}~ \mathbf{P} \in \mathcal{P},
\end{equation}
where $\mathcal{P} = \{\mathbf{V}\mathbf{V}^T| \mathbf{V}\in \mathbb{R}^{n\times r}, \mathbf{V}^T\mathbf{V} = \mathbf{I}\}$.
Suppose the true rank is known in advance, i.e., $r = r_0$, we show that $\mathbf{P} =\mathbf{V}_0 \mathbf{V}_0^T$ is the only feasible solution to problem (\ref{clean1}).

\begin{prop}\label{TH_clean}
If $r = r_0$ is known in advance, then the row space of $\mathbf{L}_0$ (identified by $\mathbf{V}_0\mathbf{V}_0^T$) is exactly recovered by solving (\ref{clean1}) when $\mathbf{S}_0^{(v)} = 0$ ($\forall v$).
\end{prop}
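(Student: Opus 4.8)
The plan is to establish the claim in two stages: feasibility of $\mathbf{P}=\mathbf{V}_0\mathbf{V}_0^T$, and its uniqueness among all $\mathbf{P}\in\mathcal{P}$. Feasibility is immediate from the derivation leading to (\ref{singleRSR}): with $\mathbf{S}_0^{(v)}=0$ we have $\mathbf{X}^{(v)}(\mathbf{I}-\mathbf{V}_0\mathbf{V}_0^T)=\mathbf{G}^{(v)}\mathbf{L}_0(\mathbf{I}-\mathbf{V}_0\mathbf{V}_0^T)=0$ for every $v$ by (\ref{rowspace}), while $\mathbf{V}_0\in\mathbb{R}^{n\times r_0}$ with $\mathbf{V}_0^T\mathbf{V}_0=\mathbf{I}$ and $r=r_0$ guarantee $\mathbf{V}_0\mathbf{V}_0^T\in\mathcal{P}$. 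Hence the real content lies in ruling out any other feasible projector.

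For uniqueness I would first recast the $V$ constraints of (\ref{clean1}) as a single statement about subspaces. Stacking them is equivalent to $\mathbf{X}_0(\mathbf{I}-\mathbf{P})=0$, where $\mathbf{X}_0=[\mathbf{X}^{(1)};\cdots;\mathbf{X}^{(V)}]=\mathbf{G}\mathbf{L}_0$. Writing $\mathbf{P}=\mathbf{V}\mathbf{V}^T$ (the orthogonal projector onto the column space of $\mathbf{V}$) and transposing, this is equivalent to $(\mathbf{I}-\mathbf{P})\mathbf{X}_0^T=0$, i.e. every column of $\mathbf{X}_0^T$ is fixed by $\mathbf{P}$. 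Therefore feasibility of $\mathbf{P}$ is \emph{exactly} the inclusion of the row space of $\mathbf{X}_0$ in the range of $\mathbf{V}$.

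It then remains to pin down the row space of $\mathbf{X}_0$. From $\mathbf{X}_0=\mathbf{G}\mathbf{U}_0\boldsymbol{\Sigma}_0\mathbf{V}_0^T$ this row space is always contained in the range of $\mathbf{V}_0$; the crux is to show equality, which reduces to $rank(\mathbf{X}_0)=rank(\mathbf{G}\mathbf{L}_0)=r_0$. This is precisely where the standing modeling assumption enters: because $\mathbf{L}_0$ is taken to be the lowest-rank latent representation carrying the full information of $\{\mathbf{X}_0^{(v)}\}_{v=1}^V$, the stacked map $\mathbf{G}$ cannot collapse the rank of $\mathbf{L}_0$ (otherwise an SVD of $\mathbf{X}_0$ would furnish a strictly lower-rank representation, contradicting minimality), so $rank(\mathbf{X}_0)=r_0$ and the row space of $\mathbf{X}_0$ equals the range of $\mathbf{V}_0$. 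Granting this, any feasible $\mathbf{V}\in\mathbb{R}^{n\times r_0}$ must have range containing the range of $\mathbf{V}_0$; since both subspaces have dimension $r_0=r$, they coincide, and two orthogonal projectors onto the same subspace are equal, giving $\mathbf{V}\mathbf{V}^T=\mathbf{V}_0\mathbf{V}_0^T$.

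The step I expect to be the main obstacle is the rank-preservation claim $rank(\mathbf{G}\mathbf{L}_0)=r_0$, i.e. that the observed views collectively span the \emph{entire} latent row space rather than a proper subspace of it. Everything else is linear-algebraic bookkeeping (the projector/row-space equivalence and the dimension count), whereas this equality is what singles out $\mathbf{V}_0\mathbf{V}_0^T$ as the unique feasible point: were the span of the view row spaces strictly smaller than the range of $\mathbf{V}_0$, there would be infinitely many $r_0$-dimensional projectors enclosing it and recovery would fail. I would therefore make the rank-preservation property (equivalently, the completeness of the latent row space already observed after (\ref{singleRSR})) explicit as the hypothesis under which the conclusion holds.
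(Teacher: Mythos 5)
Your proof is correct and follows essentially the same route as the paper's: both reduce uniqueness to the fact that, by the standing assumption that $\mathbf{L}_0$ is the lowest-rank representation carrying full information from $\{\mathbf{X}_0^{(v)}\}_{v=1}^V$, the view row spaces collectively span the full $r_0$-dimensional range of $\mathbf{V}_0$, after which a dimension count forces any feasible projector in $\mathcal{P}$ to coincide with $\mathbf{V}_0\mathbf{V}_0^T$. Your direct rank-preservation argument via the stacked matrix $\mathbf{X}_0=\mathbf{G}\mathbf{L}_0$ is a cleaner rendering of the paper's contradiction argument involving a common proper subspace $\bar{\mathbf{V}}$ of $\mathbf{V}_0$ and $\tilde{\mathbf{V}}$, and you correctly identify that the whole result hinges on that minimality hypothesis.
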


\begin{proof}
Since $\mathbf{X}^{(v)}  = \mathbf{G}^{(v)}\mathbf{L}_0 = \mathbf{G}^{(v)}\mathbf{U}_0\boldsymbol{\Sigma}_0 \mathbf{V}_0^T$ and $\mathbf{V}_0^T(\mathbf{I} - \mathbf{V}_0\mathbf{V}_0^T) = 0$, it holds that $\mathbf{X}^{(v)}(\mathbf{I} - \mathbf{V}_0\mathbf{V}_0^T) = 0$.
That is, $\mathbf{V}_0\mathbf{V}_0^T$ is a solution to (\ref{clean1}).

Suppose there is another $\tilde{\mathbf{V}} \in \mathbb{R}^{n\times r_0}$ satisfying $\tilde{\mathbf{V}}^T\tilde{\mathbf{V}} = \mathbf{I}$, and $\tilde{\mathbf{V}}\tilde{\mathbf{V}}^T$ is a solution to (\ref{clean1}).
Then, we have $\mathbf{X}^{(v)}(\mathbf{I} - \tilde{\mathbf{V}}\tilde{\mathbf{V}}^T ) = \mathbf{U}^{(v)}\boldsymbol{\Sigma}^{(v)} (\mathbf{V}^{(v)})^T(\mathbf{I} - \tilde{\mathbf{V}}\tilde{\mathbf{V}}^T ) =  0$, where $\mathbf{U}^{(v)}\boldsymbol{\Sigma}^{(v)} (\mathbf{V}^{(v)})^T$ is the compact SVD of $\mathbf{X}^{(v)}$.
Since both $\mathbf{V}_0$ and $ \tilde{\mathbf{V}}$ are the solutions to (\ref{clean1}), it holds that $\bigcup \limits_{v=1}^V \mathbf{V}^{(v)}$  $=(\bigcup \limits_{v=1}^V \mathbf{V}^{(v)})\mathbf{V}_0\mathbf{V}_0^T$  $=(\bigcup \limits_{v=1}^V \mathbf{V}^{(v)})\tilde{\mathbf{V}}\tilde{\mathbf{V}}^T$.
Now we prove $\mathbf{V}_0\mathbf{V}_0^T = \tilde{\mathbf{V}}\tilde{\mathbf{V}}^T$ by contradiction.
If $\mathbf{V}_0\mathbf{V}_0^T \neq \tilde{\mathbf{V}}\tilde{\mathbf{V}}^T$, then there exists $\bar{\mathbf{V}}$, which is a proper subspace of both $\mathbf{V}_0$ and $\tilde{\mathbf{V}}$, such that $\bigcup \limits_{v=1}^V \mathbf{V}^{(v)} = (\bigcup \limits_{v=1}^V \mathbf{V}^{(v)})\bar{\mathbf{V}}\bar{\mathbf{V}}^T$.
Note that $rank(\bar{\mathbf{V}}) < r_0$, which contradicts the fact that $\mathbf{L}_0$ is the lowest-rank representation that contains the full information from $\{\mathbf{X}_0^{(v)}\}_{v=1}^V$.
Thus, $\mathbf{V}_0\mathbf{V}_0^T = \tilde{\mathbf{V}}\tilde{\mathbf{V}}^T$.
That is, $\mathbf{V}_0\mathbf{V}_0^T $ is the only feasible solution to (\ref{clean1}) when $\mathbf{S}_0^{(v)}= 0$ ($\forall v$).
\end{proof}

When there are gross errors, i.e., there exists $ \mathbf{S}_0^{(v')} ( v' \in \{1, 2, \cdots, V\})$, such that $\mathbf{S}_0^{(v')} \neq 0$, we consider the following equivalent formulation:
\begin{equation}\label{errorcase}
\begin{split}
\min\limits_{\{\mathbf{S}^{(v)}\}_{v=1}^V, \mathbf{P} \in \mathcal{P}} & \sum\limits_{v=1}^V \|\mathbf{S}^{(v)}\|_1, \\
\text{s.t.} ~~~~~~&     (\mathbf{X}^{(v)}-\mathbf{S}^{(v)})(\mathbf{I} - \mathbf{P}) = 0, \forall v.
\end{split}
\end{equation}
Given $r = r_0$, based on the analysis of the ``ideal'' case, it is known that $\mathbf{P} = \mathbf{V}_0\mathbf{V}_0^T$ is the only feasible solution to problem (\ref{errorcase}) if $\mathbf{S}^{(v)} = \mathbf{S}_0^{(v)} (\forall v)$.
On the other hand, if $\mathbf{P} =\mathbf{V}_0 \mathbf{V}_0^T$, then problem (\ref{errorcase}) can be decomposed into $V$ sparse signal recovery \cite{Candes2005RIP} problems.
For the $v$th view, the problem is
\begin{equation}\label{ssr}
\min\limits_{\mathbf{s}^{(v)}} \|\mathbf{s}^{(v)}\|_1, ~
\text{s.t.} ~ vec((\mathbf{X}^{(v)}(\mathbf{I} - \mathbf{P})) = ((\mathbf{I} - \mathbf{P})\otimes \mathbf{I}) \mathbf{s}^{(v)},
\end{equation}
where $\otimes$ is the Kronecker product, and $vec(\cdot)$ is the operator vectorizing a matrix into a vector along the vertical direction.
If $(\mathbf{I} - \mathbf{P})\otimes \mathbf{I} $ satisfies the Restricted Isometry Property, then $vec(\mathbf{S}_0^{(v)})$ may be recovered by the convex program in (\ref{ssr})  \cite{Liu2018RSP,Candes2005RIP}.
Though it is not easy to obtain an exact conclusion, the above analysis shows that $(\mathbf{V}_0\mathbf{V}_0^T, \{\mathbf{S}_0^{(v)}\}_{v=1}^V)$ is very likely to be a critical point to problem (\ref{errorcase}).

\subsection{Clustering Procedure}
Once the estimated latent complete row space $\hat{\mathbf{V}}_0$ is obtained, we need to calculate the final clustering results based on it.
For this sake, we first present the following analysis.

If sufficient samples are drawn from each of the independent subspaces, then the authentic latent complete row space $\mathbf{V}_0$ will satisfy that $\mathbf{V}_0\mathbf{V}_0^T$ is block-diagonal \cite{Liu2013LRR,Liu2018RSP,Costeira1998SIM,NIPS2013_SSCLRR}.
In fact, under this assumption, $\mathbf{V}_0$ itself also possesses a block-diagonal structure \cite{Liu2018RSP}.
Without loss of generality, suppose the samples from the same subspace are arranged together.
That is, $\mathbf{L}_0 = [\mathbf{L}_1, \cdots, \mathbf{L}_k]$, where $\mathbf{L}_i$ ($1\le i \le k$) denotes the matrix of samples from the $i$-th subspace, and $k$ is the number of subspaces.
Let $\mathbf{U}_i\boldsymbol{\Sigma}_i\mathbf{V}_i^T$ be the compact SVD of $\mathbf{L}_i$. Recall that $\mathbf{U}_0\boldsymbol{\Sigma}_0 \mathbf{V}_0^T$ is the compact SVD of $\mathbf{L}_0$, then it holds that
\[\mathbf{V}_0 = \bar{\mathbf{V}}_0\mathbf{Q}, \bar{\mathbf{V}}_0 = \begin{bmatrix}
\mathbf{V}_1 &  & \\
& \ddots & \\
& & \mathbf{V}_k
\end{bmatrix},\]
where $\mathbf{Q} \in \mathbb{R}^{r\times r}$ is an orthogonal matrix \cite{Liu2018RSP}.
This implies that $\mathbf{V}_0$ is equivalent to a block-diagonal matrix.
On one hand, due to the block diagonal property of $\bar{\mathbf{V}}_0$, one could achieve right clustering by implementing the K-Means algorithm directly to partition the rows of $\bar{\mathbf{V}}_0$ into $k$ groups.
On the other hand, the inner products among row vectors of $\bar{\mathbf{V}}_0$ will not be changed when multiplying it by the orthogonal matrix $\mathbf{Q}$ on the right.
Therefore, one will obtain the same partition when the inputs to K-Means is replaced by $\mathbf{V}_0$.

As revealed by the above analysis, performing K-Means on the row vectors of $\hat{\mathbf{V}}_0$ is a reasonable way to obtain the final clustering results.
Algorithm \ref{alg:2} displays the whole schemes of the proposed multi-view subspace clustering method.

\begin{algorithm}
\caption{Multi-view subspace clustering by LCRSR}
\label{alg:2}
\begin{algorithmic}
\STATE \textbf{Input:}
$\{\mathbf{X}^{(v)}\}_{v=1}^V$,  number of clusters $k$.
\STATE \textbf{Parameters:} $r > 0$, $\lambda > 0$.
\STATE \textbf{Output:}
clustering results.
\STATE 1: Input $\{\mathbf{X}^{(v)}\}_{v=1}^V$, $r$ and $\lambda$,  and solve (\ref{MvRSR}) to obtain $\hat{\mathbf{V}}_0$.
\STATE 2: Implement K-Mean onto the row vectors of $\hat{\mathbf{V}}_0$ to obtain $k$ clusters.
\end{algorithmic}
\end{algorithm}

\section{Optimization of LCRSR}\label{sec_opt}
We consider to approximate the solution to problem (\ref{MvRSR}) by solving
\begin{equation}\label{MvRSR4}
\begin{array}{cl}
\min\limits_{\substack{\mathbf{V}\in \mathbb{R}^{n\times r},  \mathbf{V}^T\mathbf{V} = \mathbf{I}, \\ \mathbf{S}^{(v)} \in \mathbb{R}^{d^{(v)} \times n}}} & \lambda \sum \limits_{v=1}^V \|\mathbf{S}^{(v)}\|_1 \\
& +\sum\limits_{v=1}^V\|(\mathbf{X}^{(v)} - \mathbf{S}^{(v)})(\mathbf{I} - \mathbf{V}\mathbf{V}^T)\|.\\
\end{array}
\end{equation}
where $\lambda > 0$ is a trade-off parameter, and the non-squared Frobenius norm is used to increase the robustness against large error terms.
For more meticulous modeling, one can equip $\mathbf{S}^{(v)}$ with a view-specific parameter $\lambda^{(v)} > 0$.
Here, to avoid tedious parameter tuning and over-fitting, we just use a unified trade-off parameter.

\subsection{Optimization Procedure}
The utilization of non-squared Frobenius norm directly in the loss term makes it difficult to address problem (\ref{MvRSR4}).
Fortunately, as we will shown in Sec. \ref{sec_alo_analysis}, problem (\ref{MvRSR4}) can be addressed by solving the following problem
\begin{equation}\label{MvRSP5}
\begin{array}{cl}
\min\limits_{\substack{\mathbf{V}\in \mathbb{R}^{n\times r},  \mathbf{V}^T\mathbf{V} = \mathbf{I}, \\ \mathbf{S}^{(v)} \in \mathbb{R}^{d^{(v)} \times n}}} & \lambda \sum \limits_{v=1}^V \|\mathbf{S}^{(v)}\|_1 \\
& + \sum\limits_{v=1}^Vp^{(v)}\|(\mathbf{X}^{(v)} - \mathbf{S}^{(v)})(\mathbf{I} - \mathbf{V}\mathbf{V}^T)\|^2,\\
\end{array}
\end{equation}
where
\begin{equation}\label{def_p}
    p^{(v)} = \frac{1}{2\|(\mathbf{X}^{(v)} - \mathbf{S}^{(v)})(\mathbf{I} - \mathbf{V}\mathbf{V}^T)\|}
\end{equation}
can be seen as an automatically determined weight for the $v$th view.
To avoid being divided by 0, we can let  $p^{(v)} = \frac{1}{2\|(\mathbf{X}^{(v)} - \mathbf{S}^{(v)})(\mathbf{I} - \mathbf{V}\mathbf{V}^T)\| + \epsilon}$, where $\epsilon > 0$ is a very small constant.

Denote $\mathbf{p} = [p^{(1)}, \cdots, p^{(V)}]^T$.
Since $p^{(v)}$ is dependent on $\mathbf{S}^{(v)}$ and $\mathbf{V}$, $\mathbf{p}$ is also unknown.
Hence, there are in total three groups of unknown variables in problem (\ref{MvRSP5}).
The alternating minimization strategy is employed to optimize them.
Suppose $(\mathbf{V}_t, \{\mathbf{S}_t^{(v)}\}_{v=1}^V, \mathbf{p}_t)$ is the solution obtained at the $t$th iteration.
Denote
\begin{equation}\label{gv}
    g^{(v)}(\mathbf{V}, \mathbf{S}^{(v)}, p^{(v)}) = p^{(v)}\|(\mathbf{X}^{(v)} - \mathbf{S}^{(v)})(\mathbf{I} - \mathbf{V}\mathbf{V}^T)\|^2.
\end{equation}
Then, problem (\ref{MvRSP5}) is solved by iterating the following three procedures.

\subsubsection{Updating $\mathbf{V}$ by fixing the others.}
The $\mathbf{V}$-subproblem is
\begin{equation}\label{v_sub}
    \min\limits_{\mathbf{V}^T\mathbf{V} = \mathbf{I}} \sum\limits_{v=1}^V g^{(v)}(\mathbf{V}, \mathbf{S}_t^{(v)}, p_t^{(v)}).
\end{equation}
It is equivalent to
\begin{equation}\label{v_sub2}
    \max\limits_{\mathbf{V}^T\mathbf{V} = \mathbf{I}} Tr(\mathbf{V}^T\sum\limits_{v=1}^V {p_t^{(v)}}(\mathbf{X}^{(v)} - \mathbf{S}_t^{(v)})^T(\mathbf{X}^{(v)} - \mathbf{S}_t^{(v)})\mathbf{V}),
\end{equation}
which has a closed-form solution.
That is, $\mathbf{V}_{t+1}$ is formed by the top $r$ eigenvectors (corresponding to the $r$ maximum eigenvalues) of a semi-positive definite matrix $\mathbf{M}_t = \sum\limits_{v=1}^V {p_t^{(v)}}(\mathbf{X}^{(v)} - \mathbf{S}_t^{(v)})^T(\mathbf{X}^{(v)} - \mathbf{S}_t^{(v)})$.

\subsubsection{Updating $\mathbf{S}^{(v)}$ by fixing the others.}
Since $\{\mathbf{S}^{(v)}\}_{v=1}^V$ are independent from each other, we can solve each $\mathbf{S}^{(v)}$ individually.
The $\mathbf{S}^{(v)}$-subproblem is addressed by the proximal gradient-descent method \cite{Attouch2009ConProx, Parikh2014proximal}:
\begin{equation}\label{s_sub}
\begin{array}{l}
    \mathbf{S}_{t+1}^{(v)} = \arg\min\limits_{\mathbf{S}^{(v)}} \frac{\lambda}{\mu_t^{(v)}}\|\mathbf{S}^{(v)}\|_1 \\
     \quad ~+ \frac{1}{2}\|\mathbf{S}^{(v)} - (\mathbf{S}^{(v)}_t - \frac{\left.\partial_{\mathbf{S}^{(v)}} g^{(v)}(\mathbf{V}_{t+1}, \mathbf{S}^{(v)}, p_{t}^{(v)})\right|_{ \mathbf{S}_t^{(v)}}}{\mu_t^{(v)}})\|^2,
    \end{array}
\end{equation}
where $\mu_t^{(v)} > 0$ is a penalty parameter, and
\begin{equation}\label{s_sub2}
    \partial_{\mathbf{S}^{(v)}} g^{(v)}(\mathbf{V}_{t+1}, \mathbf{S}^{(v)}, p_t^{(v)}) \!=\! {2p_t^{(v)}}(\mathbf{S}^{(v)}\!-\!\mathbf{X}^{(v)})(\mathbf{I}\!-\!\mathbf{V}_{t+1}\mathbf{V}_{t+1}^T),
\end{equation}
is the partial derivative of $g^{(v)}(\mathbf{V}, \mathbf{S}^{(v)}, p^{(v)})$ w.r.t. $\mathbf{S}^{(v)}$ at $\mathbf{V} = \mathbf{V}_{t+1}$ and $p^{(v)} = p_t^{(v)}$.
According to \cite{Parikh2014proximal}, the penalty parameter could be set as $\mu_t^{(v)} = {2p_t^{(v)}}{\|\mathbf{I}-\mathbf{V}_{t+1}\mathbf{V}_{t+1}^T\|_2} = 2p_t^{(v)}$\footnote{Since $\mathbf{V}^T\mathbf{V} = \mathbf{I}$, it can be inferred that $\|\mathbf{I}-\mathbf{V}_{t+1}\mathbf{V}_{t+1}^T\|_2 = 1$.}, or determined by backtracking line search.
Therefore, the solution to the $\mathbf{S}^{(v)}$-subproblem is given by
\begin{equation}\label{s_sub4}
\begin{array}{ll}
    \mathbf{S}_{t+1}^{(v)} \!&= \!\mathcal{H}_{\frac{\lambda}{\mu_t^{(v)}}}\! \left[\mathbf{S}^{(v)}_t \!- \! \frac{\left.\partial_{\mathbf{S}^{(v)}} g^{(v)}(\mathbf{V}_{t+1}, \mathbf{S}^{(v)}, p_t^{(v)})\right|_{ \mathbf{S}_t^{(v)}}}{\mu_t^{(v)}}\!\right],\\
    &= \mathcal{H}_{\frac{\lambda}{\mu_t^{(v)}}}\! \left[\mathbf{S}^{(v)}_t \!- \! \frac{{2p_t^{(v)}} {(\mathbf{S}_t^{(v)}-\mathbf{X}^{(v)})(\mathbf{I}-\mathbf{V}_{t+1}\mathbf{V}_{t+1}^T)}
}{\mu_t^{(v)}}\!\right].
\end{array}
\end{equation}
$\mathcal{H}_{\lambda/\mu_t^{(v)}} $ denotes the element-wise shrinkage operator with parameter $\lambda/\mu_t^{(v)}$, and it is defined as
\begin{equation}
    \mathcal{H}_{\varepsilon}[\mathbf{x}] = \text{sign}(\mathbf{x})\max(0, |\mathbf{x}|-\varepsilon).
\end{equation}

\subsubsection{Updating $\mathbf{p}$ by fixing the others.}
According to Eq. (\ref{def_p}), $p_{t+1}^{(v)}$ is updated by
\begin{equation}\label{updateP}
    p_{t+1}^{(v)}= \frac{1}{2\|(\mathbf{X}^{(v)} - \mathbf{S}_{t+1}^{(v)})(\mathbf{I} - \mathbf{V}_{t+1}\mathbf{V}_{t+1}^T)\|}, \forall v.
\end{equation}

Algorithm \ref{alg:LCRSR} summarizes the whole optimization procedure.

\begin{algorithm}
\caption{The algorithm to solve the LCRSR model}
\label{alg:LCRSR}
\begin{algorithmic}
\STATE \textbf{Input:}
$\{\mathbf{X}^{(v)}\}_{v=1}^V$,  $r > 0$, $\lambda > 0$.
\STATE \textbf{Initialization:}
$\mathbf{S}_t^{(v)} = \mathbf{0}$, $p_t^{(v)} = 1, \forall v$, $t = 0$.
\STATE \textbf{Output:}
$\{\mathbf{S}^{(v)}\}_{v=1}^V$ and $\mathbf{V}$.
\STATE \textbf{repeat} not converged \textbf{do}
\STATE 1: \quad Compute $\mathbf{M}_t = \sum\limits_{v=1}^V {p_t^{(v)}}(\mathbf{X}^{(v)} - \mathbf{S}_t^{(v)})^T(\mathbf{X}^{(v)} - \mathbf{S}_t^{(v)})$.
\STATE 2: \quad Update $\mathbf{V}_{t+1}$ using the top $r$ eigenvectors of $\mathbf{M}_t$.
\STATE 3: \quad Update $\mathbf{S}_{t+1}^{(v)}$ according to Eq. (\ref{s_sub4}), $\forall v$.
\STATE 4: \quad Update ${p}_{t+1}^{(v)} (\forall v)$ by Eq. (\ref{updateP}).

\STATE 6: \quad $t = t+1$.
\STATE \textbf{until} meeting the stopping criterion
\end{algorithmic}
\end{algorithm}

\subsection{Convergence Analysis}\label{sec_alo_analysis}
In this subsection, we prove that Algorithm \ref{alg:LCRSR} will monotonically decrease the objective value of problem (\ref{MvRSR4}) until the convergence.

\begin{prop}\label{TH1}
Algorithm \ref{alg:LCRSR} will monotonically decrease the objective value of Eq. (\ref{MvRSR4}) until the convergence.
\end{prop}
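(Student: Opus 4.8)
The plan is to recognize Algorithm~\ref{alg:LCRSR} as block coordinate descent on an auxiliary function that majorizes the objective of (\ref{MvRSR4}), and then to transfer the resulting monotonicity back to (\ref{MvRSR4}). Write $e^{(v)} = \|(\mathbf{X}^{(v)}-\mathbf{S}^{(v)})(\mathbf{I}-\mathbf{V}\mathbf{V}^T)\|$ for the per-view residual, so that the objective of (\ref{MvRSR4}) is $F = \lambda\sum_{v}\|\mathbf{S}^{(v)}\|_1 + \sum_{v} e^{(v)}$. The starting point is the elementary identity $a = \min_{p>0}\big(p\,a^2 + \frac{1}{4p}\big)$ for $a\ge 0$, whose minimizer is $p = \frac{1}{2a}$; taken with $a = e^{(v)}$ this is exactly the weight defined in (\ref{def_p}). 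Accordingly I would introduce the auxiliary function
\[
\tilde{G}\big(\mathbf{V},\{\mathbf{S}^{(v)}\},\{p^{(v)}\}\big) = \lambda\sum_{v}\|\mathbf{S}^{(v)}\|_1 + \sum_{v}\Big(p^{(v)}(e^{(v)})^2 + \frac{1}{4p^{(v)}}\Big),
\]
which satisfies $\min_{\{p^{(v)}\}}\tilde{G} = F$ with minimizing weights given by (\ref{updateP}), and whose minimization over $(\mathbf{V},\{\mathbf{S}^{(v)}\})$ at fixed weights coincides with (\ref{MvRSP5}) because $\sum_{v}\frac{1}{4p^{(v)}}$ is then a constant.

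Next I would check that each of the three updates is nonincreasing for $\tilde{G}$ (abbreviating $\{\mathbf{S}_t^{(v)}\}$ by $\mathbf{S}_t$ and $\{p_t^{(v)}\}$ by $p_t$). Step~2 sets $\mathbf{V}_{t+1}$ to the top-$r$ eigenvectors of $\mathbf{M}_t$, which by the equivalence of (\ref{v_sub}) and (\ref{v_sub2}) globally minimizes $\mathbf{V}\mapsto\tilde{G}(\mathbf{V},\mathbf{S}_t,p_t)$ subject to $\mathbf{V}^T\mathbf{V}=\mathbf{I}$, giving $\tilde{G}(\mathbf{V}_{t+1},\mathbf{S}_t,p_t)\le\tilde{G}(\mathbf{V}_t,\mathbf{S}_t,p_t)$. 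Step~3 performs, separately for each $v$, one proximal-gradient step on $\Phi^{(v)}(\mathbf{S}^{(v)}) = \lambda\|\mathbf{S}^{(v)}\|_1 + g^{(v)}(\mathbf{V}_{t+1},\mathbf{S}^{(v)},p_t^{(v)})$; since the smooth part (\ref{gv}) is quadratic with gradient (\ref{s_sub2}) that is Lipschitz with constant $2p_t^{(v)}\|\mathbf{I}-\mathbf{V}_{t+1}\mathbf{V}_{t+1}^T\|_2^2 = 2p_t^{(v)}$ (using $\|\mathbf{I}-\mathbf{V}\mathbf{V}^T\|_2=1$), the choice $\mu_t^{(v)}=2p_t^{(v)}$ meets the hypothesis of the proximal-gradient descent lemma \cite{Parikh2014proximal}, so $\tilde{G}(\mathbf{V}_{t+1},\mathbf{S}_{t+1},p_t)\le\tilde{G}(\mathbf{V}_{t+1},\mathbf{S}_t,p_t)$. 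Finally Step~4, by (\ref{updateP}), assigns to each $p_{t+1}^{(v)}$ the minimizer from the identity above, so that $\tilde{G}(\mathbf{V}_{t+1},\mathbf{S}_{t+1},p_{t+1})\le\tilde{G}(\mathbf{V}_{t+1},\mathbf{S}_{t+1},p_t)$ and, crucially, $\tilde{G}(\mathbf{V}_{t+1},\mathbf{S}_{t+1},p_{t+1}) = F(\mathbf{V}_{t+1},\mathbf{S}_{t+1})$.

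Chaining the three inequalities gives $F(\mathbf{V}_{t+1},\mathbf{S}_{t+1}) = \tilde{G}(\mathbf{V}_{t+1},\mathbf{S}_{t+1},p_{t+1})\le\tilde{G}(\mathbf{V}_t,\mathbf{S}_t,p_t)$; and since Step~4 of the previous iteration already made $p_t$ the tightening weight (for $t\ge 1$), $\tilde{G}(\mathbf{V}_t,\mathbf{S}_t,p_t)=F(\mathbf{V}_t,\mathbf{S}_t)$, whence $F(\mathbf{V}_{t+1},\mathbf{S}_{t+1})\le F(\mathbf{V}_t,\mathbf{S}_t)$. As $F\ge 0$, the monotone sequence $\{F(\mathbf{V}_t,\mathbf{S}_t)\}$ is bounded below and therefore converges, which is the assertion. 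An equivalent surrogate-free route replaces the auxiliary function by the tangent-line bound $\sqrt{a}\le\sqrt{b}+\frac{1}{2\sqrt{b}}(a-b)$ for the concave square root, applied with $a=(e_{t+1}^{(v)})^2$ and $b=(e_t^{(v)})^2$, and combines it with the fixed-weight descent of Steps~2--3 to reach $F_{t+1}\le F_t$ directly.

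The step I expect to be the main obstacle is not any single update but the passage between the two criteria: Steps~2--3 only decrease the \emph{reweighted squared} objective (\ref{MvRSP5}), whereas the statement is about the \emph{non-squared} objective (\ref{MvRSR4}). The whole argument hinges on the fact that the weights (\ref{def_p})/(\ref{updateP}) are the exact per-view minimizers of the auxiliary variable---equivalently, on the concavity/tangent-line inequality---so that the reweighting step re-tightens $\tilde{G}$ to touch $F$ at every iterate. Two smaller points require care: one must confirm $\|\mathbf{I}-\mathbf{V}\mathbf{V}^T\|_2=1$ to pin down the Lipschitz constant that justifies $\mu_t^{(v)}$, and one must account for the $\epsilon$-regularized version of (\ref{def_p}) so that no weight $p^{(v)}$ diverges when a residual vanishes.
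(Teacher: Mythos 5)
Your proposal is correct and is essentially the paper's own argument in majorization--minimization packaging: the identity $a=\min_{p>0}\bigl(pa^2+\tfrac{1}{4p}\bigr)$ underlying your auxiliary function $\tilde{G}$ is exactly the paper's inequality $2ab\le a^2+b^2$ (its step (\ref{proof6})), and your three descent inequalities for the $\mathbf{V}$-, $\mathbf{S}$-, and $\mathbf{p}$-updates correspond one-to-one to the paper's (\ref{proof2}), (\ref{proof3}), and the re-tightening that yields (\ref{proof7}). The ``surrogate-free route'' you mention at the end is precisely the proof given in the paper.
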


\begin{proof}
Suppose after the $t$th iteration, we have obtained $\mathbf{V}_t, \{\mathbf{S}_t^{(v)}\}$, and $\mathbf{p}_t$ with $p_t^{(v)}= \frac{1}{2\|(\mathbf{X}^{(v)} - \mathbf{S}_{t}^{(v)})(\mathbf{I} - \mathbf{V}_{t}\mathbf{V}_{t}^T)\|}$, $\forall v$.
In the $(t+1)$th iteration, with $\{\mathbf{S}^{(v)}\}$ and $\mathbf{p}$ being $\{\mathbf{S}_t^{(v)}\}$ and $\mathbf{p}_t$ respectively, we address subproblem (\ref{v_sub}) to obtain $\mathbf{V}_{t+1}$.
Then, we have
\begin{equation}\label{proof1}
   \sum\limits_{v=1}^V g^{(v)}(\mathbf{V}_{t+1}, \mathbf{S}_t^{(v)}, p_t^{(v)}) \leq \sum\limits_{v=1}^V g^{(v)}(\mathbf{V}_t, \mathbf{S}_t^{(v)}, p_t^{(v)}).
\end{equation}
Denote the whole objective function in Eq. (\ref{MvRSP5}) as $h(\mathbf{V}, \{\mathbf{S}^{(v)}\}, \mathbf{p}) = \sum\limits_{v=1}^V \lambda\|\mathbf{S}^{(v)}\|_1 + g^{(v)}(\mathbf{V}, \mathbf{S}^{(v)}, p^{(v)})$. Inequality (\ref{proof1}) implies
\begin{equation}\label{proof2}
    h(\mathbf{V}_{t+1}, \{\mathbf{S}_t^{(v)}\}, \mathbf{p}_t) \leq h(\mathbf{V}_{t}, \{\mathbf{S}_t^{(v)}\}, \mathbf{p}_t).
\end{equation}
Then, we update $\{\mathbf{S}^{(v)}\}$ by solving subproblem (\ref{s_sub}) view-by-view.
Due to the convergence property of the proximal gradient-decent method \cite{Beck2009FISTA}, it holds that
\begin{equation}\label{proof3}
    h(\mathbf{V}_{t+1}, \{\mathbf{S}_{t+1}^{(v)}\}, \mathbf{p}_t) \leq h(\mathbf{V}_{t+1}, \{\mathbf{S}_t^{(v)}\}, \mathbf{p}_t).
\end{equation}
With inequalities (\ref{proof2}) and (\ref{proof3}), we have $h(\mathbf{V}_{t+1}, \{\mathbf{S}_{t+1}^{(v)}\}, \mathbf{p}_t)$ $\leq$ $h(\mathbf{V}_{t}, \{\mathbf{S}_t^{(v)}\}, \mathbf{p}_t)$.
Unfolding this inequality, it is
\begin{equation}\label{proof4}
\begin{split}
\sum\limits_{v=1}^V \lambda \|\mathbf{S}_{t+1}^{(v)}\|_1 + p_t^{(v)}\|(\mathbf{X}^{(v)} - \mathbf{S}_{t+1}^{(v)})(\mathbf{I} - \mathbf{V}_{t+1}\mathbf{V}_{t+1}^T)\|^2 \\
    \leq \sum\limits_{v=1}^V \lambda \|\mathbf{S}_{t}^{(v)}\|_1 + p_t^{(v)}\|(\mathbf{X}^{(v)} - \mathbf{S}_t^{(v)})(\mathbf{I} - \mathbf{V}_{t}\mathbf{V}_{t}^T)\|^2.
    \end{split}
\end{equation}
Substituting $p_t^{(v)} = \frac{1}{2\|(\mathbf{X}^{(v)} - \mathbf{S}_{t}^{(v)})(\mathbf{I} - \mathbf{V}_{t}\mathbf{V}_{t}^T)\|}$ into the above inequality, we have
\begin{equation}\label{proof5}
\begin{split}
\sum\limits_{v=1}^V \lambda \|\mathbf{S}_{t+1}^{(v)}\|_1 + \sum\limits_{v=1}^V\frac{\|(\mathbf{X}^{(v)} - \mathbf{S}_{t+1}^{(v)})(\mathbf{I} - \mathbf{V}_{t+1}\mathbf{V}_{t+1}^T)\|^2}{2\|(\mathbf{X}^{(v)} - \mathbf{S}_{t}^{(v)})(\mathbf{I} - \mathbf{V}_{t}\mathbf{V}_{t}^T)\|} \\
    \leq  \sum\limits_{v=1}^V \lambda \|\mathbf{S}_{t}^{(v)}\|_1 + \sum\limits_{v=1}^V\frac{\|(\mathbf{X}^{(v)} - \mathbf{S}_t^{(v)})(\mathbf{I} - \mathbf{V}_{t}\mathbf{V}_{t}^T)\|^2}{2\|(\mathbf{X}^{(v)} - \mathbf{S}_{t}^{(v)})(\mathbf{I} - \mathbf{V}_{t}\mathbf{V}_{t}^T)\|}.
    \end{split}
\end{equation}

Note that $2ab \leq a^2 + b^2 \Rightarrow b - \frac{b^2}{2a} \leq a - \frac{a^2}{2a} (a \geq 0)$, we have
\begin{equation}\label{proof6}
\begin{array}{l}
\|(\mathbf{X}^{(v)} \!- \! \mathbf{S}_{t+1}^{(v)})(\mathbf{I} \!-\! \mathbf{V}_{t+1}\mathbf{V}_{t+1}^T)\| \!- \! \frac{\|(\mathbf{X}^{(v)} \!-\! \mathbf{S}_{t+1}^{(v)})(\mathbf{I}\! -\! \mathbf{V}_{t+1}\mathbf{V}_{t+1}^T)\|^2}{2\|(\mathbf{X}^{(v)} \!- \!\mathbf{S}_{t}^{(v)})(\mathbf{I}\! - \! \mathbf{V}_{t}\mathbf{V}_{t}^T)\|} \\
    \leq \|(\mathbf{X}^{(v)} \!-\! \mathbf{S}_t^{(v)})(\mathbf{I} \!-\! \mathbf{V}_{t}\mathbf{V}_{t}^T)\| \!- \! \frac{\|(\mathbf{X}^{(v)} \!-\! \mathbf{S}_t^{(v)})(\mathbf{I} \!-\! \mathbf{V}_{t}\mathbf{V}_{t}^T)\|^2}{2\|(\mathbf{X}^{(v)} \!-\! \mathbf{S}_{t}^{(v)})(\mathbf{I} \!-\! \mathbf{V}_{t}\mathbf{V}_{t}^T)\|}.
    \end{array}
\end{equation}
Summing the inequality in (\ref{proof5}) over all views and combining the inequality (\ref{proof4}), we arrive at
\begin{equation}\label{proof7}
\begin{array}{l}
\sum\limits_{v=1}^V \lambda \|\mathbf{S}_{t+1}^{(v)}\|_1 + \sum\limits_{v=1}^V\|(\mathbf{X}^{(v)} \!- \! \mathbf{S}_{t+1}^{(v)})(\mathbf{I} \!-\! \mathbf{V}_{t+1}\mathbf{V}_{t+1}^T)\|  \\
    \leq \sum\limits_{v=1}^V \lambda \|\mathbf{S}_{t}^{(v)}\|_1 + \sum\limits_{v=1}^V\|(\mathbf{X}^{(v)} \!-\! \mathbf{S}_t^{(v)})(\mathbf{I} \!-\! \mathbf{V}_{t}\mathbf{V}_{t}^T)\|.
    \end{array}
\end{equation}
That is, the objective value of Eq. (\ref{MvRSR4}) is monotonically decreased.
Recall that the objective function is lower bounded by 0.
Therefore, the objective value sequence will converge.
\end{proof}

\subsection{Computational Complexity}
Algorithm \ref{alg:2} has mainly two steps. In Step 1, the LCRSR algorithm is conducted to obtain $\hat{\mathbf{V}}_0$. Then, Step 2 performs K-Means to obtain the final clustering results, which consumes only $\mathcal{O}(nkr)$.

As for Algorithm \ref{alg:LCRSR}, in Step 1, calculating $\mathbf{M}_t = \sum\limits_{v=1}^V {p_t^{(v)}}(\mathbf{X}^{(v)} - \mathbf{S}_t^{(v)})^T(\mathbf{X}^{(v)} - \mathbf{S}_t^{(v)})$ costs $2nd + n^2d$ elementary additions or multiplications, where $d = \sum\limits_{v=1}^V d^{(v)}$.
Then, in Step 2, $\mathbf{V}_{t+1}$ is updated by calculating the top $r$ eigenvectors of the $n\times n$ matrix $\mathbf{M}_t$.
This can be accomplished by performing the partial eigenvalue decomposition, which takes $\mathcal{O}(n^2r)$ \cite{Golub2013MC}.
To update $\{\mathbf{S}_{t+1}^{(v)}\}_{v=1}^V$, we need first to calculate the gradient defined in Eq. (\ref{s_sub2}), which consumes $\mathcal{O}(n^2(d+r))$.
Then the shrinkage operation spends $\mathcal{O}(nd)$.
In Step 5, when updating $\mathbf{p}_{t+1}$, since the calculations of $\mathbf{X}^{(v)} - \mathbf{S}_{t+1}^{(v)}$ and $\mathbf{I} - \mathbf{V}_{t+1}\mathbf{V}_{t+1}^T$ can be shared in the procedures of calculating $\mathbf{M}_{t+1}$ and updating $\{\mathbf{S}_{t+1}^{(v)}\}_{v=1}^V$,
this step takes extra $\mathcal{O}(n^2d)$ for multiplying $\mathbf{X}^{(v)} - \mathbf{S}_{t+1}^{(v)}$ by $\mathbf{I} - \mathbf{V}_{t+1}\mathbf{V}_{t+1}^T$ on all views and $\mathcal{O}(nd)$ for calculating the Frobenius norms.
In total, Algorithm \ref{alg:LCRSR} has a time complexity of $O(n^2(d+r))$ for each iteration.

Denote $T$ as the number of iterations spent by Algorithm \ref{alg:LCRSR} to converge, the overall complexity of Algorithm \ref{alg:2} is $O(n^2(d+r)T + nkr)$.

\section{Experiment}\label{sec_exp}
In this section, experiments are conducted to examine the performance of the proposed LCRSR approach.
We first test LCRSR's ability to recover the latent complete row space on synthetic data.
Then, we investigate its clustering performance and apply it to background subtraction from multi-view videos. Finally, experiments on the sensitivity to parameters, convergence behavior and computational time are conducted.

\subsection{Data Descriptions}
We consider performing experiments on 8 datasets, including 2 UCI datasets, 1 text-gene dataset, 1 image-text dataset, 2 image dataset and 2 video datasets.
Concretely, the Dermatology dataset\footnote{https://archive.ics.uci.edu/ml/datasets/dermatology}  (Derm) and the Forest type mapping dataset\footnote{https://archive.ics.uci.edu/ml/datasets/Forest+type+mapping} (Forest) are downloaded from the UCI machine learning repository.
The text-gene and image-text datasets are the prokaryotic phyla dataset\footnote{https://github.com/mbrbic/MultiViewLRSSC/tree/master/datasets} (Prok) and Wikipedia articles\footnote{http://www.svcl.ucsd.edu/projects/crossmodal/} (Wiki), respectively.
The image datasets includes a 7-class subset of the Caltech101\footnote{http://www.vision.caltech.edu/Image Datasets/Caltech101/} (Caltech7) and the USPS digits\footnote{http://www.csie.ntu.edu.tw/$\sim$cjlin/libsvmtools/datasets/multiclass.html\#usps }.
The used video datasets are the EPFL Laboratory  sequences\footnote{https://cvlab.epfl.ch/data/data-pom-index-php/} (Lab), and Dongzhimen Transport Hub Crowd\footnote{http://www.escience.cn/people/huyongli/Dongzhimen.html} (DTHC).

\textbf{1) Derm}: This dataset was collected to differentiate the type of Eryhemato-Squamous diseases in dermatology.
There are 366 patients diagnosed with 6 diseases.
For each patient, there are 11 clinical features (the age feature is discarded due to missing values) and 22 histopathological features.
The two kinds of features correspond to the clinical view and histopathological view of this dataset, respectively.

\textbf{2) Forest}: This is a multi-temporal remote sensing dataset.
It aims to tell apart different forest types using the spectral data derived from the ASTER satellite imagery.
It contains 524 instances belonging to 4 classes.
There are 2 kinds of feature representations. The first kind includes 9 reflected spectral features in the green, red and near-infrared bands in the ASTER images. The second kind is composed of 18 geographically weighted similarity variables \cite{johnson2012ForestData}.

\textbf{3) Prok}: It consists of 551 prokaryotic samples belonging to 4 classes. The species are represented by 1 textual view and 2 genomic views \cite{brbic2016landscape}.
The textual descriptions are summarized into a document-term matrix that records the TF-IDF \cite{Salton1988TFIDF} re-weighted word frequencies.
The genomic views are the proteome composition and the gene repertoire.

\textbf{4) Wiki}: This database is composed of 2,866 image-text documents classified into 10 categories. The documents were collected from the Wikipedia's featured articles. The articles were segmented into sections according to section headings, and the images were grouped according to their sections where they were positioned by the author(s).

\textbf{5) Caltech7}: This subset includes 441 images of 7 object classes: Dolla-Bill, Faces, Garfield, Motorbike, Snoopy, Stop-Sign and Windsor-Chair. To form a multi-view dataset, three kinds of features are extracted, they are, SIFT \cite{lowe2004SIFT}, GIST \cite{oliva2001GIST} and HOG \cite{dalal2005HOG}.

\textbf{6) USPS}: This database consists of 9,298 images of 10 handwritten digits (0 to 9). Each image is of size $16\times 16$.
To meet the multi-view setting, we extract the SIFT feature and GIST feature as two views.

\textbf{7) Lab}: This is a multi-camera pedestrian video shot inside a laboratory by 4 cameras \cite{2011LabData}.
Each sequence is composed of 3915 frames, recording the event that four people entered the room in sequence and walked around.
The frames are converted into gray images and the resolution of each image is reduced from 288 $\times$ 360 to 144 $\times$ 180.
Finally, the size of the data matrix for each view is $29250\times 3915$.
We display some example frames in Fig. \ref{Fig.ExpImga}.

\textbf{8) DTHC}: The video clips in this dataset are captured from real scenarios at the Dongzhiimen Transport Hub in Beijing. There are 3 cameras deployed in a hall to record the actions of passengers \cite{2017WangLapLRR}.
We use the video clips from the category named ``Dispersing from the center quickly'' for experiments.
For each of the three sequences, there are 151 synchronized frames with resolution $1080 \times 1920$.
Similarly, we convert the frames to gray images and the resolution is reduced to $135 \times 240$.
The resultant data matrix is with size $32400\times 151$ for each view.
Some frames from this dataset are shown in Fig. \ref{Fig.ExpImgb}.

\begin{figure}
\centering
\subfigure[]{
\includegraphics[width=0.45\textwidth]{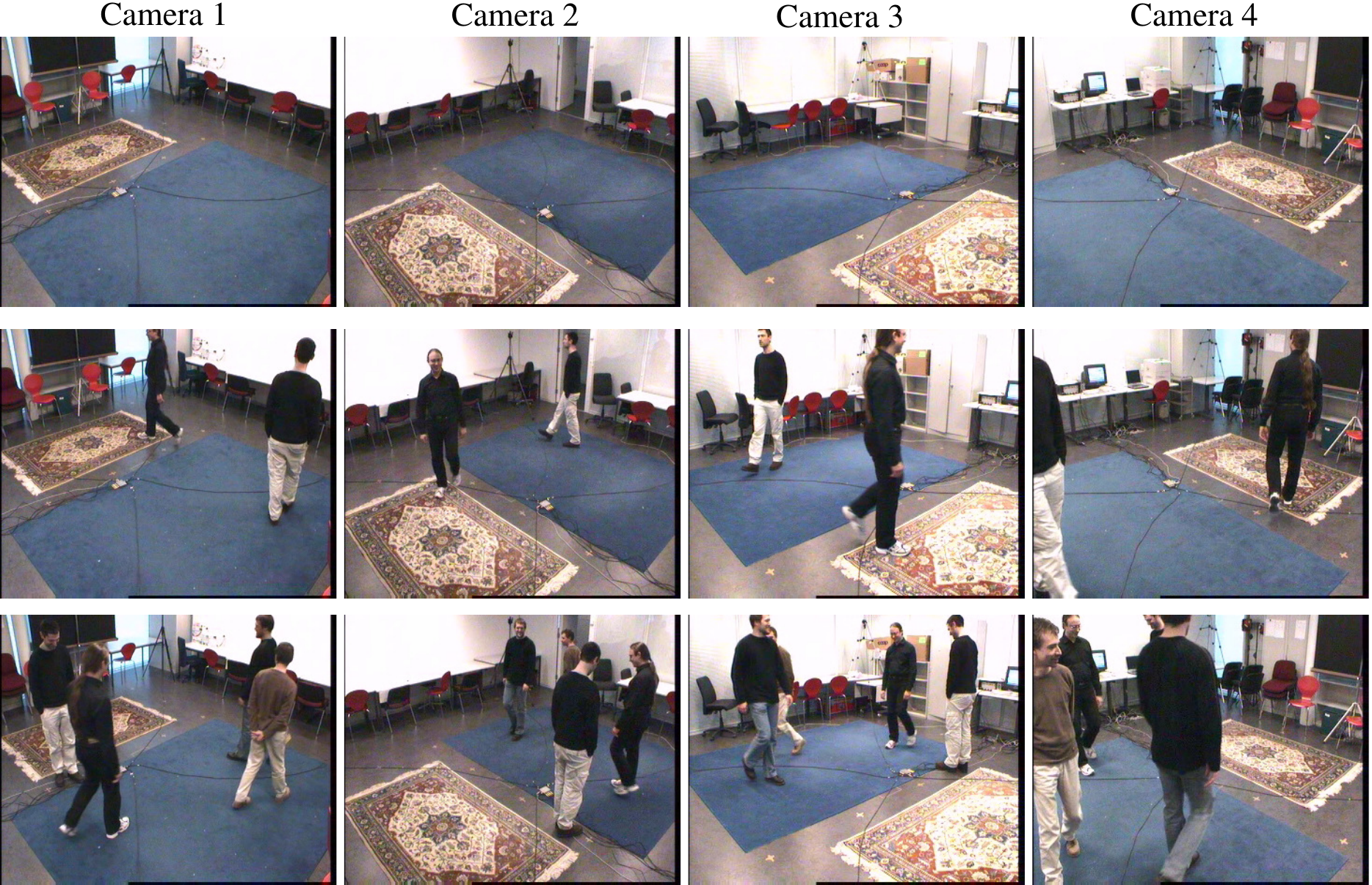}\label{Fig.ExpImga}}
\subfigure[]{
\includegraphics[width=0.45\textwidth]{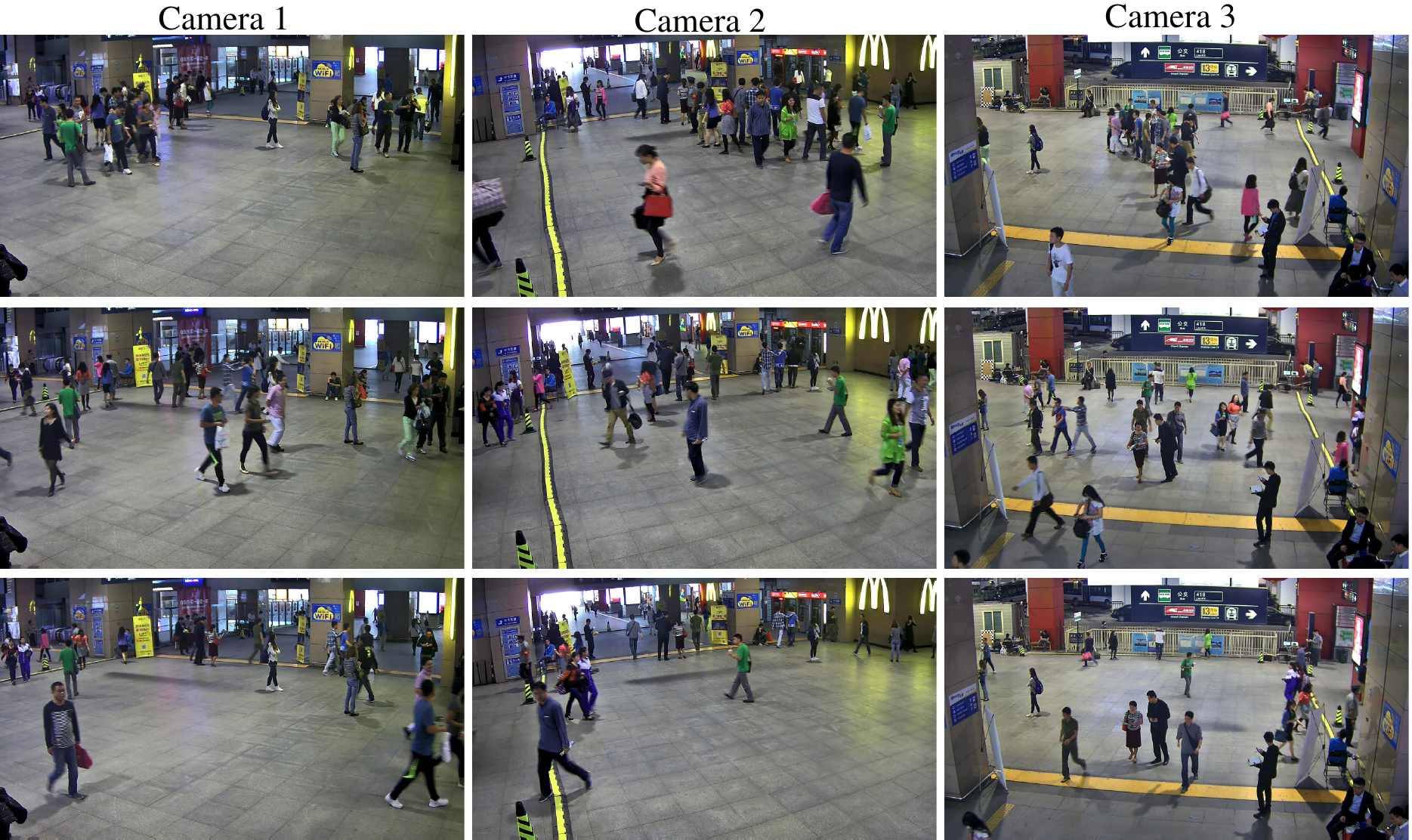}\label{Fig.ExpImgb}}
\centering
\caption{Example frames from the video datasets. Images in the same row are shot at the same time. (a) Lab. (b) DTHC. }
\label{Fig.ExpImg}
\end{figure}

\begin{figure}
  \includegraphics[width=0.45\textwidth]{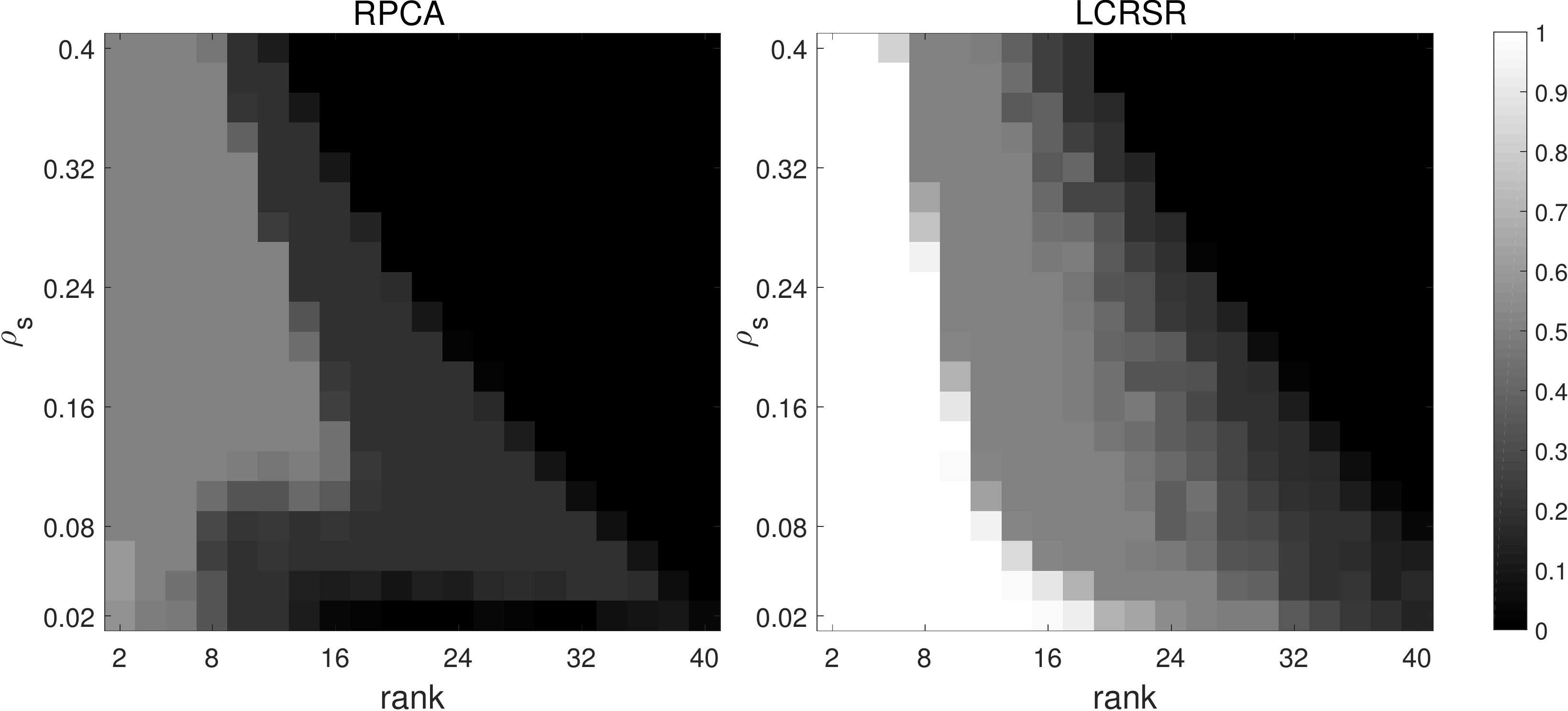}\\
  \caption{Latent row space recovery results on the synthetic data. }\label{figVrecover}
\end{figure}

\begin{table*}
  \centering
  \caption{Clustering performance comparison (mean $\pm $ std). Symbols `$\star/-/\triangledown$' indicate the winning/drawing/losing of LCRSR when compared with the corresponding method under the Wilcoxon signed-rank test with confidence level 0.01. The highest scores are in bold.}\label{Table1}
 \begin{tabular}{cccccc}
    \hline
Datasets	& Methods		&NMI	&ACC	&F	&AdjRI\\ \hline
\multirow{11}*{Derm}	&BestLRR	&0.766 $\pm$ 0.000$\star$	 &0.781 $\pm$ 0.000$\star$	&0.720 $\pm$ 0.000$\star$	&0.654 $\pm$ 0.000$\star$\\
	&ConLRR	&0.849 $\pm$ 0.000$\star$	&0.932 $\pm$ 0.000$\star$	 &0.878 $\pm$ 0.000$\star$	&0.848 $\pm$ 0.000$\star$\\
	&BestRPCA	&0.699 $\pm$ 0.003$\star$	&0.631 $\pm$ 0.019$\star$	 &0.663 $\pm$ 0.005$\star$	&0.586 $\pm$ 0.008$\star$\\
	&ConRPCA	&0.675 $\pm$ 0.000$\star$	&0.724 $\pm$ 0.000$\star$	 &0.680 $\pm$ 0.000$\star$	&0.608 $\pm$ 0.000$\star$\\
	&LMSC	&0.823 $\pm$ 0.003$\star$	&0.885 $\pm$ 0.001$\star$	 &0.859 $\pm$ 0.002$\star$	&0.825 $\pm$ 0.002$\star$\\
	&MVSC	&0.710 $\pm$ 0.000$\star$	&0.798 $\pm$ 0.000$\star$	 &0.758 $\pm$ 0.000$\star$	&0.690 $\pm$ 0.000$\star$\\
	&CSMSC	&0.902 $\pm$ 0.000$\star$	&0.959 $\pm$ 0.000$\star$	 &0.933 $\pm$ 0.000$\star$	&0.917 $\pm$ 0.000$\star$\\
	&LTMSC	&0.818 $\pm$ 0.003$\star$	&0.913 $\pm$ 0.001$\star$	 &0.877 $\pm$ 0.002$\star$	&0.847 $\pm$ 0.002$\star$\\
	&MLRSSC	&0.820 $\pm$ 0.002$\star$	&0.866 $\pm$ 0.001$\star$	 &0.841 $\pm$ 0.001$\star$	&0.803 $\pm$ 0.001$\star$\\
	&LCRSR	&\textbf{0.922} $\pm$ 0.006	&\textbf{0.963} $\pm$ 0.003	 &\textbf{0.948} $\pm$ 0.005	&\textbf{0.935} $\pm$ 0.006\\ \hline
\multirow{11}*{Forest}	&BestLRR	&0.513 $\pm$ 0.000$\star$	 &0.782 $\pm$ 0.000$\star$	&0.632 $\pm$ 0.001$\star$	&0.491 $\pm$ 0.001$\star$\\
	&ConLRR	&0.563 $\pm$ 0.003$\star$	&0.818 $\pm$ 0.001$\star$	 &0.682 $\pm$ 0.002$\star$	&0.561 $\pm$ 0.003$\star$\\
	&BestRPCA	&0.478 $\pm$ 0.000$\star$	&0.711 $\pm$ 0.000$\star$	 &0.583 $\pm$ 0.000$\star$	&0.430 $\pm$ 0.000$\star$\\
	&ConRPCA	&0.563 $\pm$ 0.000$\star$	&0.807 $\pm$ 0.000$\star$	 &0.662 $\pm$ 0.000$\star$	&0.533 $\pm$ 0.000$\star$\\
	&LMSC	&0.552 $\pm$ 0.001$\star$	&0.803 $\pm$ 0.000$\star$	 &0.660 $\pm$ 0.000$\star$	&0.529 $\pm$ 0.001$\star$\\
	&MVSC	&0.374 $\pm$ 0.002$\star$	&0.494 $\pm$ 0.001$\star$	 &0.518 $\pm$ 0.000$\star$	&0.193 $\pm$ 0.000$\star$\\
	&CSMSC	&0.576 $\pm$ 0.001$\star$	&0.819 $\pm$ 0.001$\star$	 &0.682 $\pm$ 0.001$\star$	&0.560 $\pm$ 0.002$\star$\\
	&LTMSC	&0.478 $\pm$ 0.001$\star$	&0.713 $\pm$ 0.001$\star$	 &0.564 $\pm$ 0.000$\star$	&0.396 $\pm$ 0.001$\star$\\
	&MLRSSC	&0.490 $\pm$ 0.000$\star$	&0.730 $\pm$ 0.000$\star$	 &0.596 $\pm$ 0.000$\star$	&0.435 $\pm$ 0.000$\star$\\
	&LCRSR	&\textbf{0.609} $\pm$ 0.010	&\textbf{0.844} $\pm$ 0.006	 &\textbf{0.735} $\pm$ 0.008	&\textbf{0.627 }$\pm$ 0.011\\ \hline
\multirow{11}*{Prok}	&BestLRR	&0.162 $\pm$ 0.003$\star$	 &0.410 $\pm$ 0.003$\star$	&0.362 $\pm$ 0.001$\star$	&0.063 $\pm$ 0.002$\star$\\
	&ConLRR	&0.152 $\pm$ 0.003$\star$	&0.546 $\pm$ 0.001$\star$	 &0.539 $\pm$ 0.000$\star$	&0.054 $\pm$ 0.001$\star$\\
	&BestRPCA	&0.208 $\pm$ 0.010$\star$	&0.563 $\pm$ 0.009$\star$	 &0.493 $\pm$ 0.007$\star$	&0.184 $\pm$ 0.008$\star$\\
	&ConRPCA	&0.126 $\pm$ 0.015$\star$	&0.521 $\pm$ 0.007$\star$	 &0.510 $\pm$ 0.002$\star$	&0.103 $\pm$ 0.004$\star$\\
	&LMSC	&0.384 $\pm$ 0.000$\star$	&0.635 $\pm$ 0.000$\star$	 &0.553 $\pm$ 0.000$\star$	&0.339 $\pm$ 0.000$\star$\\
	&MVSC	&0.416 $\pm$ 0.000$\star$	&0.653 $\pm$ 0.000$-$	 &\textbf{0.607} $\pm$ 0.000$-$	&0.318 $\pm$ 0.000$\star$\\
	&CSMSC	&0.305 $\pm$ 0.005$\star$	&0.625 $\pm$ 0.022$\star$	 &0.535 $\pm$ 0.026$\star$	&0.288 $\pm$ 0.036$\star$\\
	&LTMSC	&0.121 $\pm$ 0.005$\star$	&0.407 $\pm$ 0.016$\star$	 &0.388 $\pm$ 0.009$\star$	&0.022 $\pm$ 0.002$\star$\\
	&MLRSSC	&0.382 $\pm$ 0.025$\star$	&0.653 $\pm$ 0.030$-$	&0.526 $\pm$ 0.039$\star$	&0.307 $\pm$ 0.053$\star$\\
	&LCRSR	&\textbf{0.445} $\pm$ 0.065	&\textbf{0.676} $\pm$ 0.075	 &0.595 $\pm$ 0.070	&\textbf{0.390} $\pm$ 0.098\\ \hline
\multirow{11}*{Wiki}	&BestLRR	&0.523 $\pm$ 0.000$\star$	 &0.538 $\pm$ 0.000$\star$	&0.479 $\pm$ 0.000$\star$	&0.417 $\pm$ 0.000$\star$\\
	&ConLRR	&0.440 $\pm$ 0.001$\star$	&0.479 $\pm$ 0.001$\star$	 &0.418 $\pm$ 0.001$\star$	&0.348 $\pm$ 0.001$\star$\\
	&BestRPCA	&0.209 $\pm$ 0.000$\star$	&0.291 $\pm$ 0.000$\star$	 &0.208 $\pm$ 0.000$\star$	&0.116 $\pm$ 0.000$\star$\\
	&ConRPCA	&0.061 $\pm$ 0.000$\star$	&0.193 $\pm$ 0.000$\star$	 &0.130 $\pm$ 0.000$\star$	&0.029 $\pm$ 0.000$\star$\\
	&LMSC	&0.508 $\pm$ 0.005$\star$	&0.575 $\pm$ 0.005$\star$	 &0.505 $\pm$ 0.004$\star$	&0.446 $\pm$ 0.004$\star$\\
	&MVSC	&0.455 $\pm$ 0.002$\star$	&0.544 $\pm$ 0.002$\star$	 &0.453 $\pm$ 0.002$\star$	&0.383 $\pm$ 0.003$\star$\\
	&CSMSC	&0.225 $\pm$ 0.002$\star$	&0.331 $\pm$ 0.002$\star$	 &0.245 $\pm$ 0.001$\star$	&0.154 $\pm$ 0.001$\star$\\
	&LTMSC	&0.443 $\pm$ 0.002$\star$	&0.515 $\pm$ 0.002$\star$	 &0.450 $\pm$ 0.003$\star$	&0.384 $\pm$ 0.003$\star$\\
	&MLRSSC	&0.535 $\pm$ 0.003$\star$	&0.587 $\pm$ 0.006$\star$	 &0.522 $\pm$ 0.007$\star$	&0.465 $\pm$ 0.007$\star$\\
	&LCRSR	&\textbf{0.563} $\pm$ 0.001	&\textbf{0.609} $\pm$ 0.002	 &\textbf{0.530} $\pm$ 0.003	&\textbf{0.475} $\pm$ 0.003\\ \hline
\multirow{11}*{Caltech7}	&BestLRR	&0.632 $\pm$ 0.000$\star$	 &0.712 $\pm$ 0.000$\star$	&0.635 $\pm$ 0.000$\star$	&0.566 $\pm$ 0.000$\star$\\
	&ConLRR	&0.717 $\pm$ 0.000$\star$	&0.746 $\pm$ 0.000$\star$	 &0.708 $\pm$ 0.000$\star$	&0.650 $\pm$ 0.000$\star$\\
	&BestRPCA	&0.647 $\pm$ 0.001$\star$	&0.721 $\pm$ 0.001$\star$	 &0.645 $\pm$ 0.001$\star$	&0.577 $\pm$ 0.002$\star$\\
	&ConRPCA	&0.728 $\pm$ 0.001$\star$	&0.755 $\pm$ 0.001$\star$	 &0.717 $\pm$ 0.001$\star$	&0.662 $\pm$ 0.001$\star$\\
	&LMSC	&0.724 $\pm$ 0.004$\star$	&0.748 $\pm$ 0.003$\star$	 &0.718 $\pm$ 0.004$\star$	&0.666 $\pm$ 0.005$\star$\\
	&MVSC	&0.666 $\pm$ 0.000$\star$	&0.773 $\pm$ 0.000$\star$	 &0.655 $\pm$ 0.000$\star$	&0.579 $\pm$ 0.000$\star$\\
	&CSMSC	&0.674 $\pm$ 0.000$\star$	&0.741 $\pm$ 0.000$\star$	 &0.617 $\pm$ 0.000$\star$	&0.532 $\pm$ 0.000$\star$\\
	&LTMSC	&0.709 $\pm$ 0.001$\star$	&0.726 $\pm$ 0.000$\star$	 &0.659 $\pm$ 0.000$\star$	&0.588 $\pm$ 0.000$\star$\\
	&MLRSSC	&0.714 $\pm$ 0.005$\star$	&0.736 $\pm$ 0.002$\star$	 &0.700 $\pm$ 0.003$\star$	&0.644 $\pm$ 0.003$\star$\\
	&LCRSR	&\textbf{0.775} $\pm$ 0.023	&\textbf{0.797} $\pm$ 0.032	 &\textbf{0.773} $\pm$ 0.030	&\textbf{0.728} $\pm$ 0.037\\ \hline
\multirow{11}*{USPS}	&BestLRR	&0.747 $\pm$ 0.000$\star$	 &0.628 $\pm$ 0.000$\star$	&0.640 $\pm$ 0.000$\star$	&0.596 $\pm$ 0.000$\star$\\
	&ConLRR	&0.747 $\pm$ 0.000$\star$	&0.619 $\pm$ 0.001$\star$	 &0.640 $\pm$ 0.000$\star$	&0.596 $\pm$ 0.000$\star$\\
	&BestRPCA	&0.736 $\pm$ 0.000$\star$	&0.645 $\pm$ 0.000$\star$	 &0.634 $\pm$ 0.001$\star$	&0.589 $\pm$ 0.001$\star$\\
	&ConRPCA	&0.746 $\pm$ 0.003$\star$	&0.618 $\pm$ 0.006$\star$	 &0.639 $\pm$ 0.006$\star$	&0.595 $\pm$ 0.008$\star$\\
	&LMSC	&0.652 $\pm$ 0.007$\star$	&0.662 $\pm$ 0.001$\star$	 &0.612 $\pm$ 0.002$\star$	&0.564 $\pm$ 0.003$\star$\\
	&MVSC	&0.607 $\pm$ 0.001$\star$	&0.493 $\pm$ 0.008$\star$	 &0.485 $\pm$ 0.004$\star$	&0.416 $\pm$ 0.005$\star$\\
	&CSMSC	&0.757 $\pm$ 0.000$\star$	&0.635 $\pm$ 0.000$\star$	 &0.658 $\pm$ 0.000$\star$	&0.616 $\pm$ 0.000$\star$\\
	&LTMSC	&0.746 $\pm$ 0.004$\star$	&0.649 $\pm$ 0.015$\star$	 &0.649 $\pm$ 0.011$\star$	&0.607 $\pm$ 0.013$\star$\\
	&MLRSSC	&0.710 $\pm$ 0.001$\star$	&0.702 $\pm$ 0.001$\star$	 &0.661 $\pm$ 0.001$\star$	&0.621 $\pm$ 0.001$\star$\\
	&LCRSR	&\textbf{0.769} $\pm$ 0.001	&\textbf{0.764} $\pm$ 0.023	 &\textbf{0.734} $\pm$ 0.012	&\textbf{0.702} $\pm$ 0.014\\ \hline

  \end{tabular}
\end{table*}
\subsection{Baselines}
We would like to make comparisons in terms of clustering performance with the following competitors.
\begin{itemize}
  \item We first consider to make comparison with LRR \cite{Liu2013LRR}. Respectively, the best single-view LRR results and the results of LRR on the concatenation of multiple views form two baselines, termed ``BestLRR'' and ``ConLRR''.

  \item Second, the results of the classical Robust Principle Component Analysis (RPCA) \cite{candes2011RPCA} are compared. On each view or the concatenated data matrix, RPCA is first implemented to estimate the low-rank representation. Then LRR is applied on the low-rank representation to obtain the final clustering results.
      This leads to two competitors, called ``BestRPCA'' and ``ConRPCA''.

  \item Besides the above single-view subspace clustering methods, we also compare with the following multi-view subspace clustering methods. They are, LMSC \cite{Zhang2017LMSC}, MVSC \cite{Gao2015MVSC}, CSMSC \cite{Luo2018CSMSC}, LTMSC \cite{zhang2015LTMSC} and MLRSSC \cite{brbic2018MLRSSC}.
\end{itemize}

Except for the proposed LCRSR, all the baselines methods need to perform spectral clustering onto the learned affinity matrices to obtain the final clustering results. We consider directly run spectral clustering on the learned affinity matrix without any additional complex post-processing on it.

\begin{figure*}
\centering
\subfigure[]{
\includegraphics[width=0.48\textwidth]{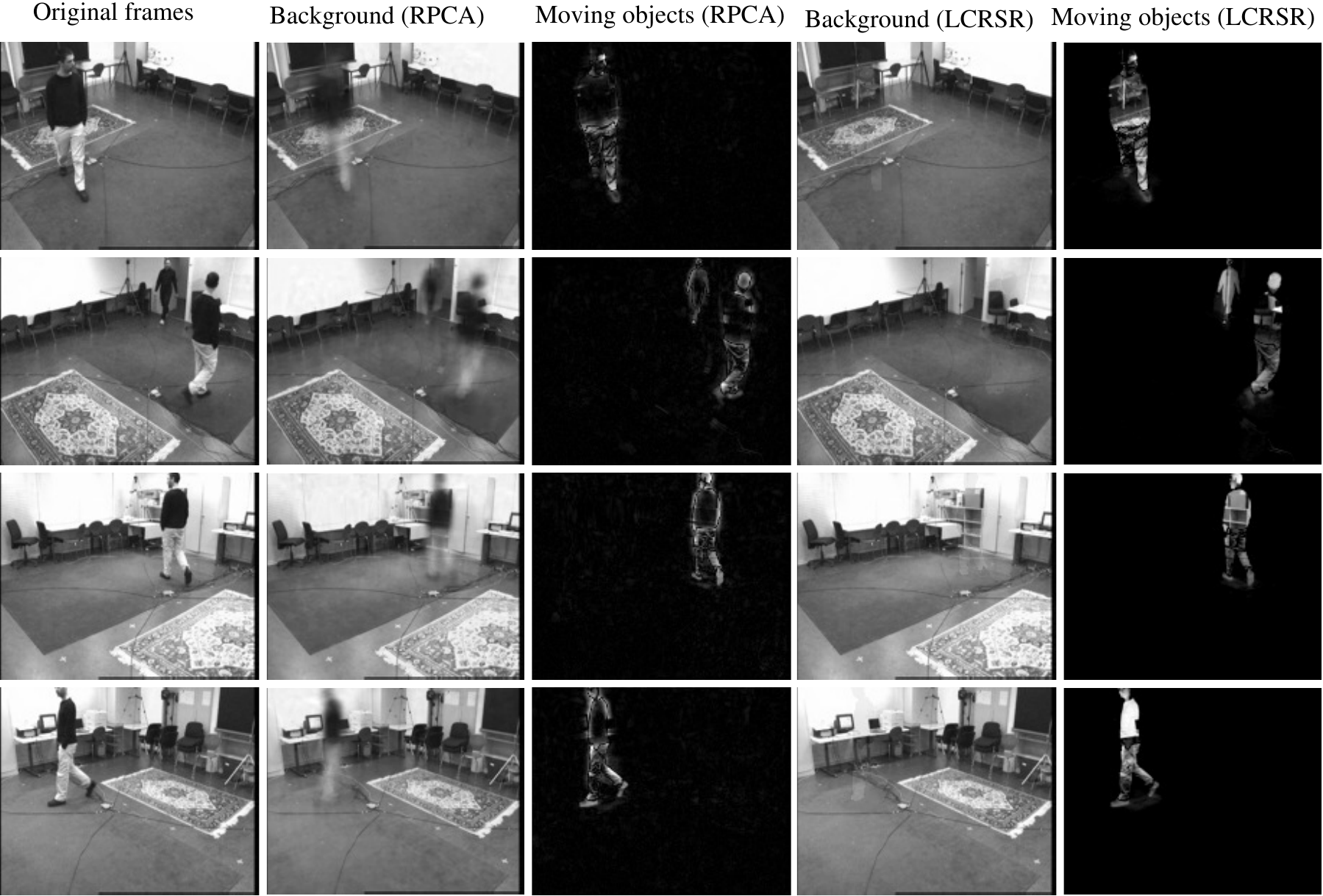}\label{Fig.Moving1a}}
\subfigure[]{
\includegraphics[width=0.48\textwidth]{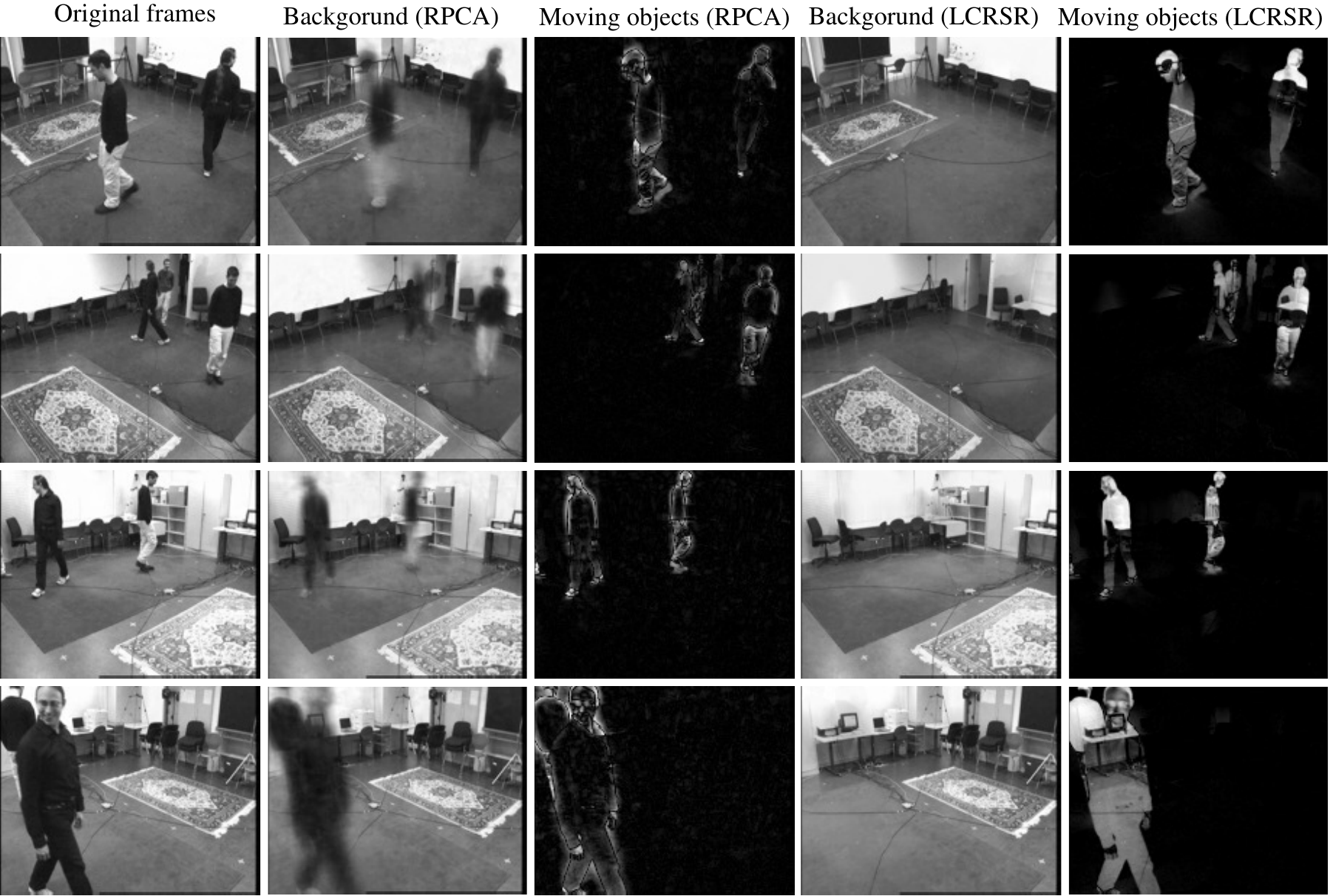}\label{Fig.Moving1b}}
\centering
\caption{Background subtraction in multi-camera surveillance video. Frames at two time points from the Lab video dataset. Each row corresponds to a camera. The parameters in LCRSR are set as $r = 5$ and $\lambda = 10$.}
\label{Fig.Moving1}
\end{figure*}

\begin{figure*}
\centering
\includegraphics[width=0.98\textwidth]{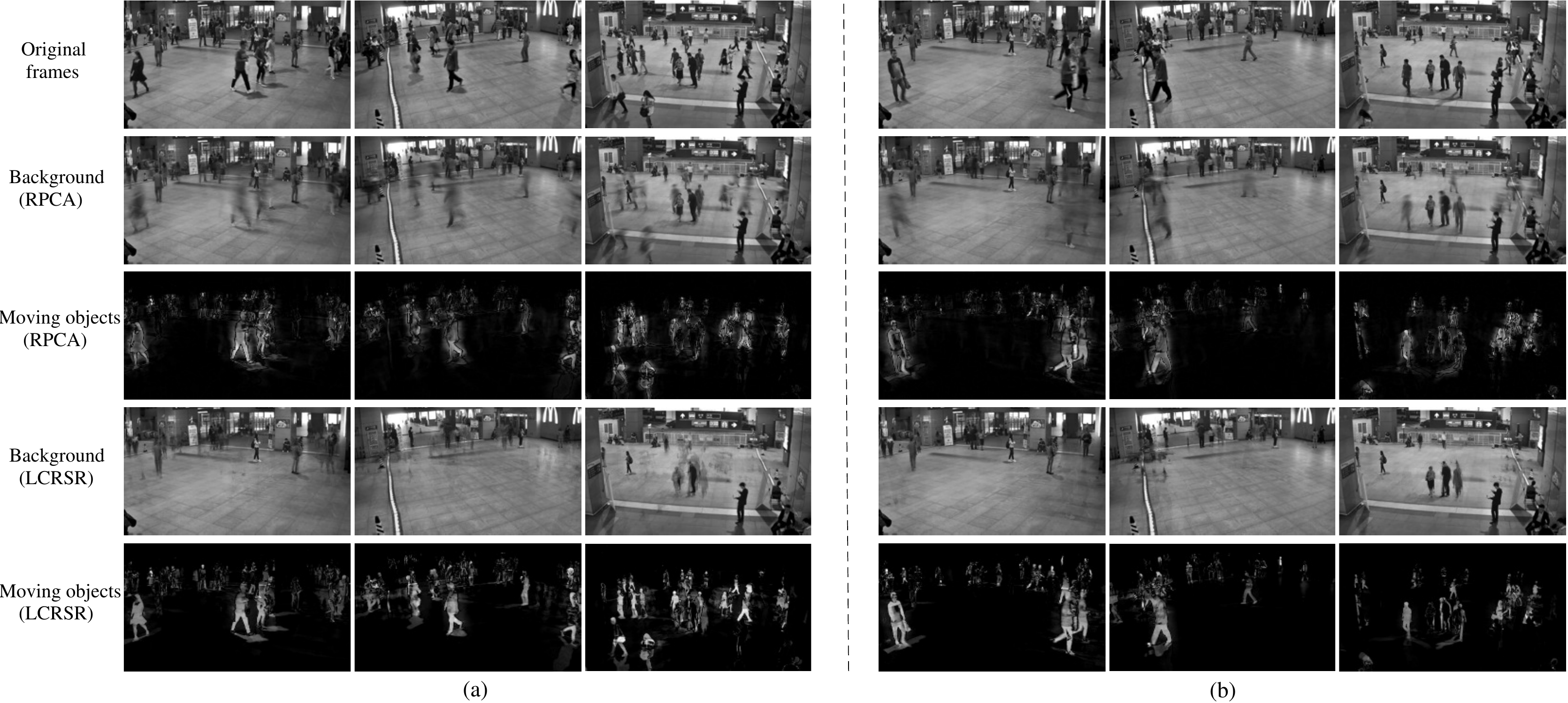}
\centering
\caption{Background subtraction in multi-camera surveillance video. Frames at two time points from the DTHC video dataset. Each column corresponds to a camera. The parameters in LCRSR are set as $r = 3$ and $\lambda = 10$.}
\label{Fig.Moving2}
\end{figure*}

\subsection{Experimental Settings}
For the proposed LCRSR, there are two parameters: $r$ and $\lambda$.
$r$ is manually tuned to maximize the performance of LCRSR, and $\lambda$ is selected within the candidate set $\{10^{-6}, 10^{-5}, \cdots, 10^{3}\}$.
For LRR, its parameter $\lambda$ is tuned within $\{0.01, 0.05, 0.1, 0.5, 1, \cdots 4\}$.
As suggested in \cite{candes2011RPCA}, the parameter in RPCA is set as $\lambda = 1/\sqrt{\max(n, \tilde{d})}$, where $\tilde{d}$ equals to $d$ (ConRPCA) or $d^{(v)}$ (single-view RPCA).
Regarding LMSC, except for the regularization parameter $\lambda$, it also needs to set the dimension of the latent representation.
As for the latent dimension, it is tuned from $\{5, 10, \cdots, 30\}$ for Derm and Forest, which are with low dimensionality.
For the rest datasets with relatively higher dimensionality, the latent dimension is set as 100, as suggested by the authors \cite{Zhang2017LMSC}.
Then, $\lambda$ in LMSC is tuned from $\{10^{-3}, 10^{-2}, \cdots, 10^{3}\}$.
The two parameters $\lambda_1$ and $\lambda_2$ of MVSC are both chosen from $\{10^{-3}, 10^{-2}, 10^{-1}, 1\}$.
As for CSMSC, both $\lambda_C$ and $\lambda_D$ are selected from $\{10^{-3}, 10^{-2}, 10^{-1}, 1\}$.
The regularization parameter $\lambda$ in LTMSC is chosen from $\{10^{-2}, 10^{-1}, 10^{0}, 10^{1}, 10^{2}\}$.
For MLRSSC, we compared with its centroid-based version \cite{brbic2018MLRSSC}. As suggested in \cite{brbic2018MLRSSC}, the low-rank parameter $\beta_1$ is tuned from 0.1 to 0.9 with step 0.2, and the sparsity parameter is set as $1-\beta_1$, and the consensus parameter is tuned from 0.3 to 0.9 with step 0.2.

All the experiments are conducted by Matlab2016a on a work station with Xeon CPU E3-1245 v3 (3.4GHz) and 32.0 GB RAM memory on the Windows 7 operating system.
The codes of all the compared methods are downloaded from the authors' homepages or provided by their authors\footnote{The implementation of RPCA is obtained from the TFOCS, a Matlab toolbox available at https://github.com/cvxr/TFOCS.}.

The clustering performance are evaluated by four metrics, the normalized mutual information (NMI), the clustering accuracy (ACC), the F score, and the adjusted rand index (AdjRI).
All the compared methods need to perform K-Means to get the final partition. To avoid inaccuracy brought by random initialization, K-Means is repeated 50 times and the average score and
standard deviation are reported.

\subsection{Subspace Recovery Results on Synthetic Data}
The ability of LCRSR to recover the latent complete row space is tested here on the synthetic data.
For comparison, RPCA is also performed to recover  $\mathbf{V}_0\mathbf{V}_0^T$: estimating $\mathbf{G}\mathbf{L}_0$ by solving problem (\ref{org3}) and then performing SVD on the estimated $\mathbf{G}\mathbf{L}_0$ to obtain $\mathbf{V}_0\mathbf{V}_0^T$.

We generate data matrices of two views, according to the model $\mathbf{X}^{(v)} = \mathbf{G}^{(v)}\mathbf{L}_0 + \mathbf{S}_0^{(v)}$ ($v = 1, 2$).
We consider the case that $d^{(1)} = d^{(2)} = 100$, $m = 50$, and $n = 200$.
Concretely, we construct $\mathbf{L}_0 \in \mathbb{R}^{m\times n}$ by sampling 100 points from each of two randomly generated subspaces.
Note $\mathbf{G}^{(v)}$ is required to have orthogonal columns when using RPCA to recover the latent row space.
To make fair comparisons, we randomly generate
$\mathbf{G}^{(v)}\in \mathbb{R}^{d^{(v)}\times m}$ ($v = 1, 2$) to make it have orthogonal columns.
$\mathbf{S}_0^{(v)}$ ($v = 1, 2$) is composed of $\{-1, 0, 1\}$ with a Bernoulli model, i.e., $\mathbf{S}_{0,ij}^{(v)} $ equals to 0 with probability $1-\rho_s$, and $\pm1$ each with probability $\rho_s/2$.
The values of $r$ and $\rho_s$ are set in the range of $[1:1:20]$ and $[0.02:0.02:0.4]$, respectively.
Thus, $rank(\mathbf{L}_0)$ varies from 2 to 40 with an interval of 2.

The Signal-to-Noise Ratio ($\text{SNR}_{\text{dB}}$) is employed to evaluate the recovery quality, i.e., the similarity between  $\mathbf{V}_0\mathbf{V}_0^T$ and $\hat{\mathbf{V}}_0\hat{\mathbf{V}}_0^T$.
We define the recovery score for each  $(r, \rho_s)$ pair as
\begin{equation}
    \text{score} = \begin{cases}
    0, & \text{SNR}_{\text{dB}} < 15,\\
    0.2 & 15\leq \text{SNR}_{\text{dB}} < 20,\\
    0.5 & 20\leq \text{SNR}_{\text{dB}} < 30,\\
    1, & \text{SNR}_{\text{dB}} > 30.
    \end{cases}
\end{equation}

The rank $r_0$ is assumed to be given when running LCRSR.
For each $(r, \rho_s)$ pair, experiments are repeated on 20 random simulations, and the average score is recorded.
Fig. \ref{figVrecover} shows the recovery results, i.e., the average score from 20 random trials.
As we can see, though $\mathbf{G}^{(v)} (\forall v)$ has orthogonal columns, the recovery performance of RPCA is still limited and it makes no correct recovery.
In comparison, the proposed LCRSR succeeds when $r_0$ is relatively low and $\{\mathbf{S}_0^{(v)}\}_{v=1}^V$ is relatively sparse.
In a word, LCRSR performs better than RPCA in recovering the latent complete row space.
\subsection{Clustering Performance}
We consider to test the clustering performance of LCRSR on the first 6 datasets.
The input matrices are normalized to have unit $\ell_2$ norm column-wisely.
The clustering comparison results are displayed in Table \ref{Table1}.
Besides, the corresponding Wilcoxon signed-rank test results with confidence level 0.01 are also reported.

From the results, the following observations can be obtained.
(1) Feature concatenation is likely to fail on datasets with heterogeneous features.
For example, on Wiki and Prok, ConLRR/ConRPCA performs worse than BestLRR/BestRPCA, though information from multiple views are used.
Note that both Wiki and Prok are actually cross-modal, i.e., text-image and text-gene respectively.
Thus, it could be inferred that the direct concatenation of heterogeneous features might deteriorate the performance.
(2) Though both LMSC and the proposed LCRSR are based on the latent representation assumption, LCRSR shows advantages in clustering performance over LMSC.
The reason might be two-fold.
First, LCRSR collects complete information from multiple views while LMSC cannot guarantee the information completeness of the learned latent representation.
Thus, LCRSR shows superiority with the better maintained underlying clustering structure.
Second, the model of LMSC is actually formulated on the concatenation of multiple views, which will somehow impair its performance.
(3) While the performance of some of the compared multi-view subspace clustering approaches is unstable, LCRSR achieves robust clustering performance in spite of varying data types.
For example, CSMSC ranks the second on both Derm and Forest, but it does not perform very well on Wiki.
The possible reason is following.
The compared multi-view competitors learn self-representation matrices view by view, the underly clustering structure could be damaged when forming the final affinity matrix from view-specific self-representations.
In contrast, LCRSR directly recovers the authentic latent complete row space of the data, with the underly clustering structure being better protected.

On the whole, the clustering performance of the proposed LCRSR outperforms the baselines in most cases in terms of all the four metrics.

\subsection{Background Subtraction}
Multi-camera video surveillance is a natural application of multi-view learning.
Background modeling from video clips shot by multiple static cameras can be taken as a multi-view low-rank matrix analysis problem.
Concretely, the low-rank part corresponds to the background, while the sparse component corresponds to moving objects \cite{Liu2018RSP,candes2011RPCA}.
Thus, in this subsection, we apply the proposed LCRSR to background subtraction from multi-camera video surveillance.

The experiments are conducted on the Lab and DTHC video datasets.
To visualize the sparse components, different from the above clustering experiments, the data matrices are not normalized.
For comparison, we also implement RPCA \cite{candes2011RPCA} on each single view.
For each view, the parameter $\lambda$ for RPCA is well tuned around $1/\sqrt{(\max(n, d^{(v)}))}$, as suggested in \cite{candes2011RPCA}.
Fig. \ref{Fig.Moving1} and Fig. \ref{Fig.Moving2} present frames at two time points from each of the video datasets.
For Lab, the proposed LCRSR successfully detaches the moving objects from the background, while the background extracted by RPCA still contains a portion of the objects' pixels.
Since DTHC is captured from real scenarios, which records a group of people disperse from the center quickly, the task becomes more challenging.
As shown in Fig. \ref{Fig.Moving2}, both methods fail to separate the moving objects from the background completely.
In comparison, LCRSR only confuses the slow-moving people with the background and performs much better than RPCA.

\begin{figure}
\centering
\subfigure[]{
\includegraphics[width=0.23\textwidth]{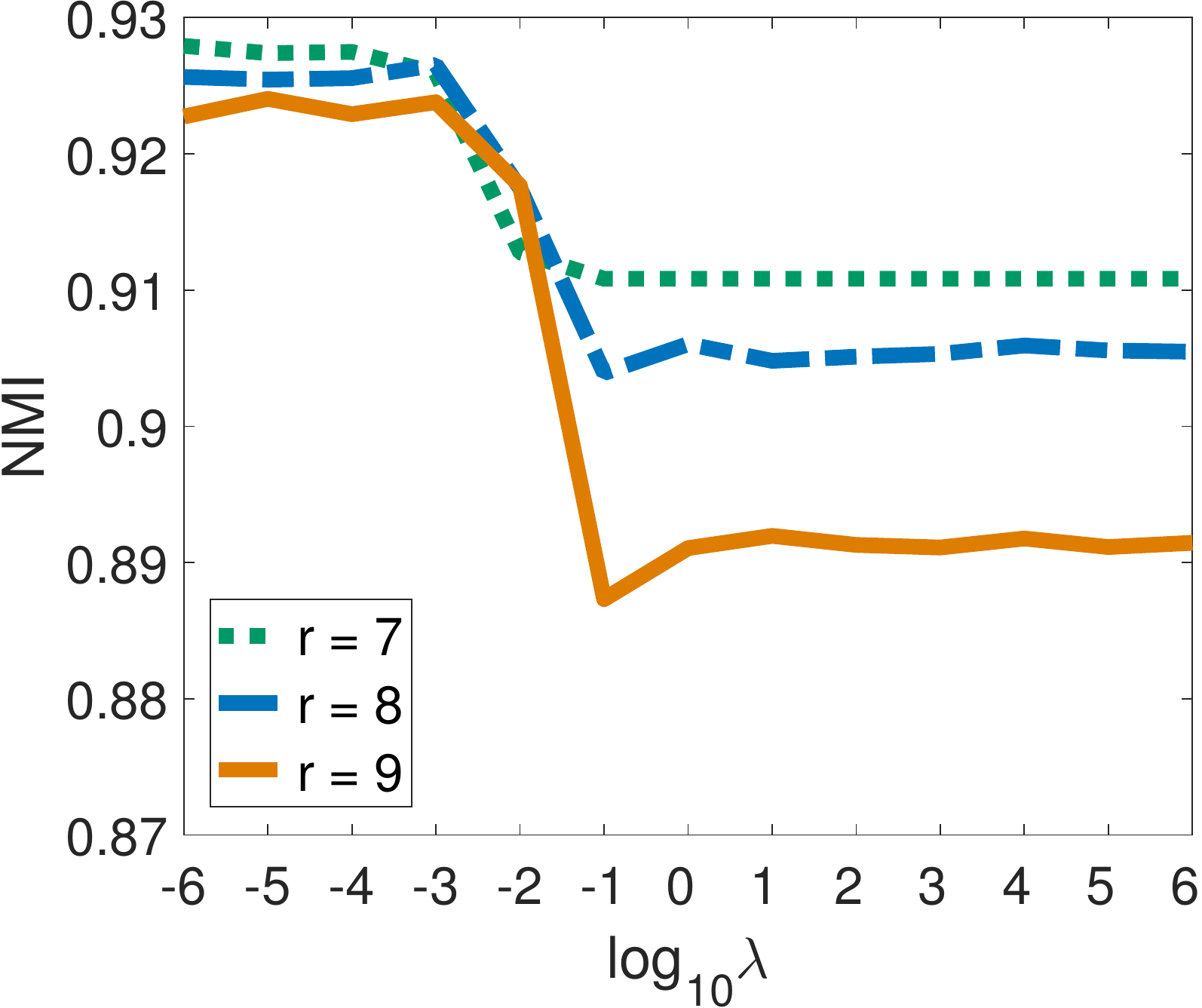}\label{Fig.Para_sub1}}
\subfigure[]{
\includegraphics[width=0.23\textwidth]{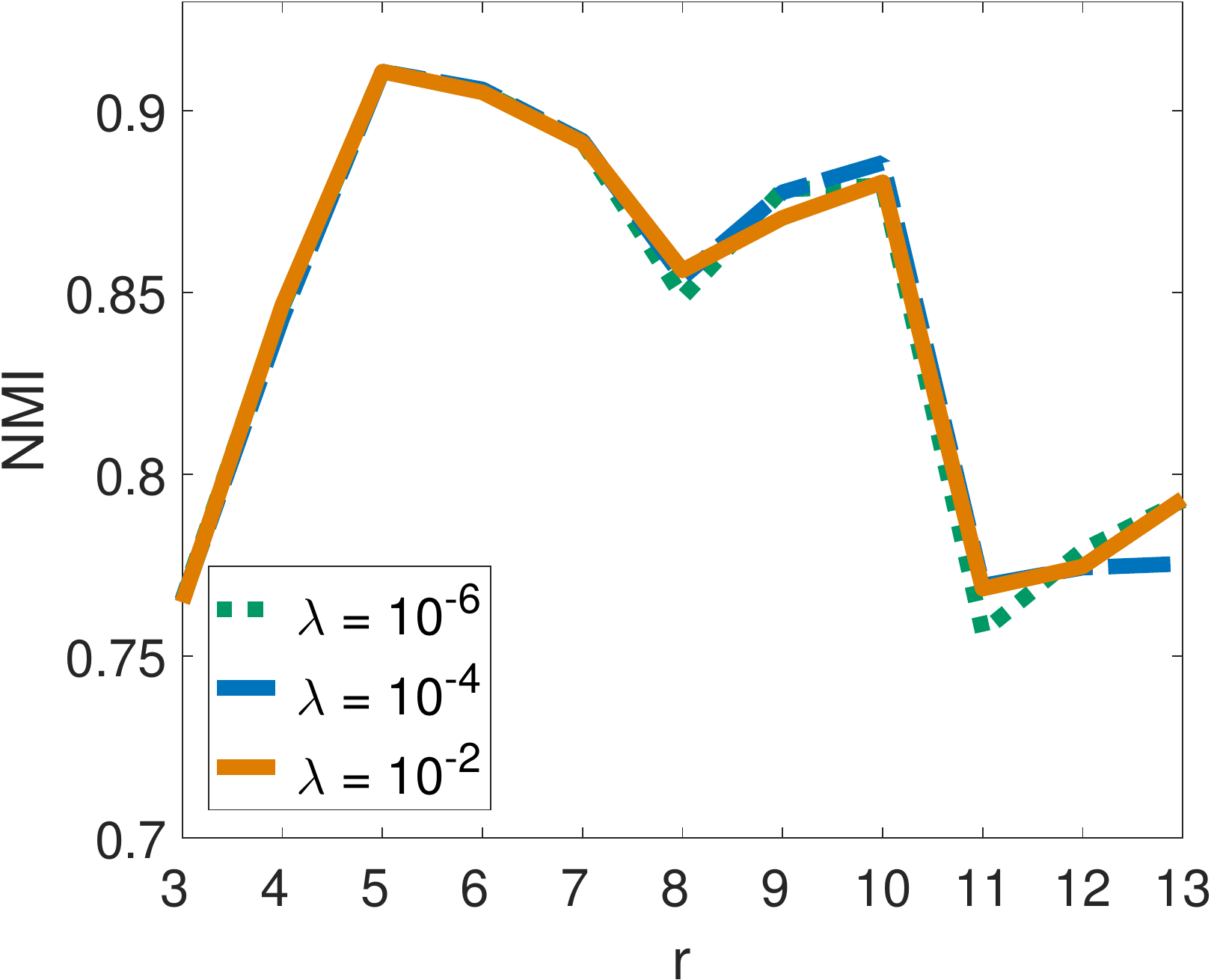}\label{Fig.Para_sub2}}
\centering
\caption{Sensitivity of LCRSR w.r.t. parameters ($\lambda$ and $r$) evaluated by NMI on the Derm dataset. (a) The parameter $\lambda$ varies when $r$ is fixed as 7, 8 and 9, respectively. (b) The parameter $r$ varies when $\lambda$ is set as $10^{-6}$, $10^{-4}$, and $10^{-2}$, respectively.}
\label{Fig.Para}
\end{figure}

\begin{figure}
\centering
\subfigure[]{
\includegraphics[width=0.225\textwidth]{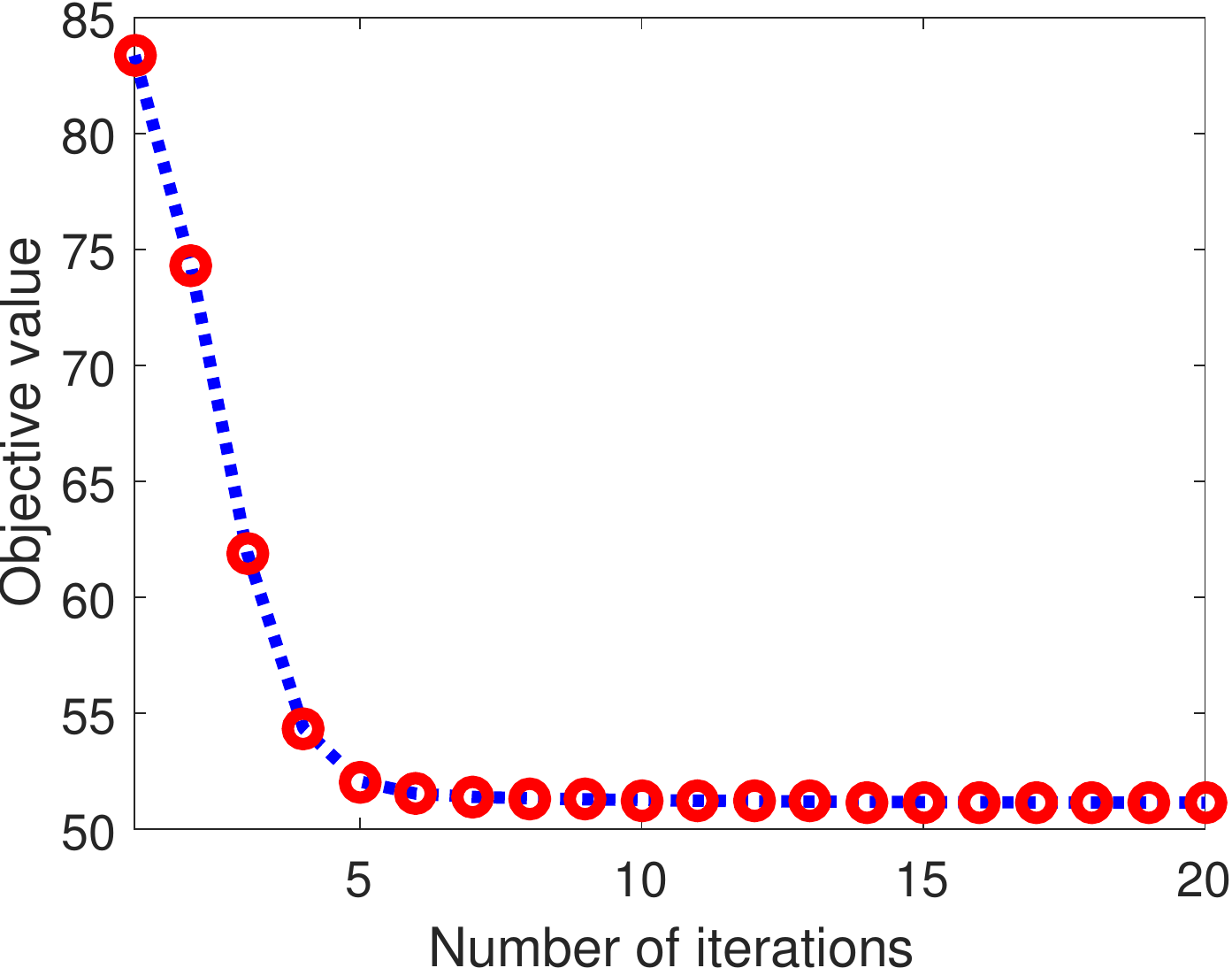}}
\subfigure[]{
\includegraphics[width=0.23\textwidth]{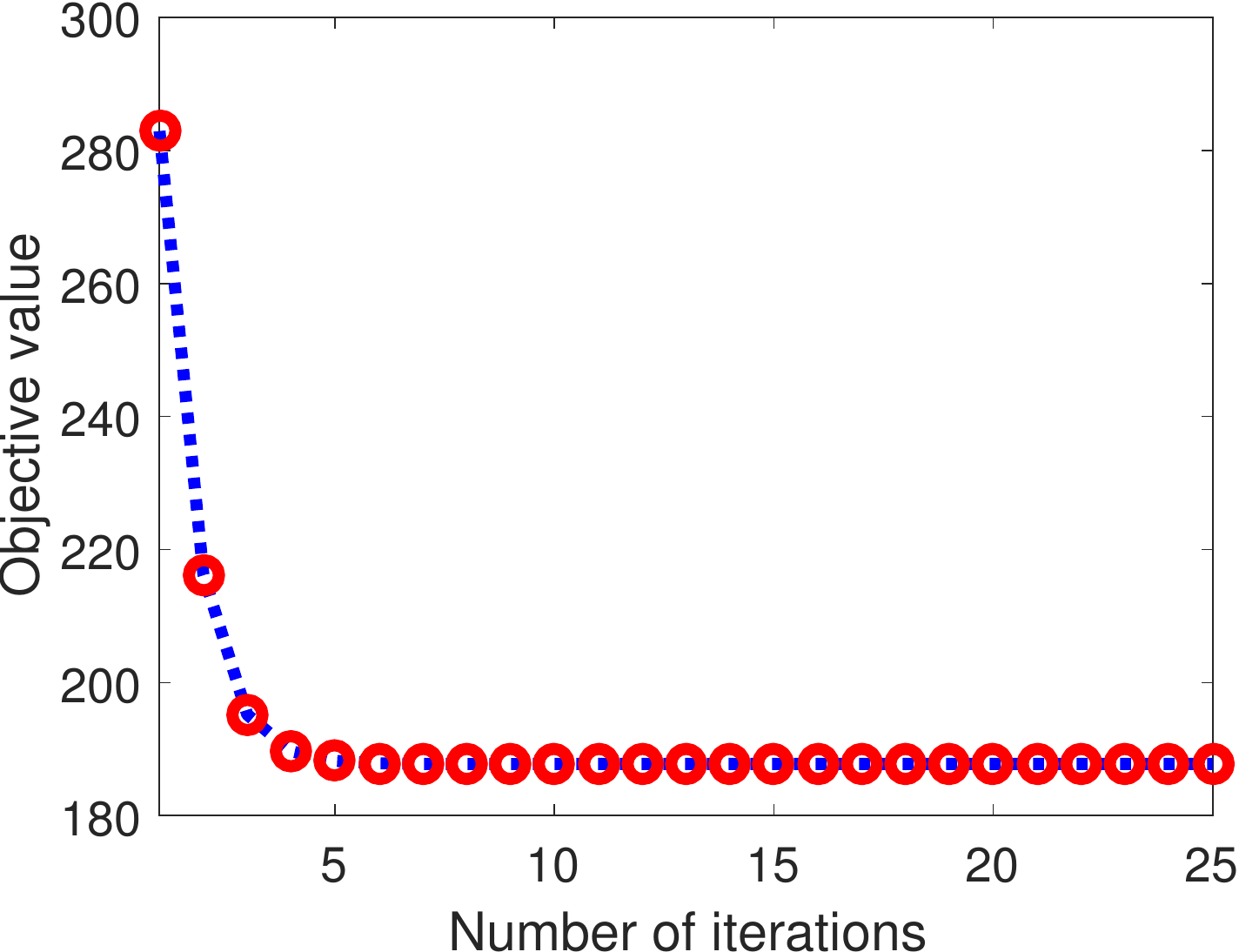}}
\centering
\caption{Convergence performance of LCRSR on two datasets: (a) Wiki. (b) USPS.}
\label{Fig.Conv}
\end{figure}

\subsection{Parameter Study}
In this subsection, we study the sensitivity of LCRSR with respect to parameters.
There are two parameters in LCRSR, i.e., the low-rank parameter $r$ and the trade-off parameter $\lambda$.
Experiments are conducted on the Derm dataset, and the experimental results are shown in Fig. \ref{Fig.Para}.
Since the results are similar for the four evaluation metrics, only the NMI results are reported.

As shown in Fig. \ref{Fig.Para_sub1}, the performance of LCRSR degenerates dramatically when $\lambda \geq 10^{-2}$.
This is because that the solution to $\mathbf{S}_0^{(v)}$ will become all zeros when $\lambda$ is sufficiently large.
As for the rank parameter $r$, Fig. \ref{Fig.Para_sub2} shows that LCRSR could perform well when $r$ is in a certain range.
For this dataset, $r = 5$ is a good choice.

\subsection{Convergence and Computational Time Comparison}
The convergence behavior and computational time of LCRSR are considered in this subsection.
We regard the algorithms to be converged if the relative difference of objectives between two successive iterations is less than $10^{-4}$.
Fig. \ref{Fig.Conv} show the plots of objective values of LCRSR on datasets Wiki and USPS.
From the plots, we can see that LCRSR converges in about 20 iterations.

Table \ref{Table2} summarizes the computational complexity of the compared multi-view subspace clustering methods, where $T$ denotes the number of iterations needed to converge for the corresponding method.
Except for LCRSR, all of the rest methods have a cubic time complexity of $n$.
As for the dimension $d$, except for LMSC, the time complexity of all the other method is linear to $d$.
Accordingly, Table \ref{Table3} displays the time comparison results.
Table \ref{Table3} shows that MVSC spends the most time for all datasets.
This might be because MVSC has the most complex constraints and requires more iterations to converge.
Consistent with the computational complexity analysis, the proposed LCRSR costs the least time on all datasets.

\begin{table}
  \centering
  \caption{Time complexity of the compared multi-view subspace clustering methods. }\label{Table2}
 \begin{tabular}{|c|c|} \hline
 Methods &  Time complexity \\ \hline
 LMSC    &  $\mathcal{O}(T(n^3 + d^3))$ \\ \hline
 MVSC    &  $\mathcal{O}(Tn^2(n + d))$ \\\hline
 CSMSC   &  $\mathcal{O}(TVn(n^2 + d))$ \\\hline
 LTMSC   &  $\mathcal{O}(Tn^2(n+d))$    \\\hline
 MLRSSC  &  $\mathcal{O}(TVn^2(n+d))$        \\\hline
 LCRSR   &  $\mathcal{O}(Tn^2(d + r))$  \\\hline
 \end{tabular}
\end{table}

\begin{table}
  \centering
  \caption{Computational time comparison (seconds). }\label{Table3}
 \begin{tabular}{|c|c|c|c|c|c|c|} \hline
 Methods &  Derm  & Forest & Prok   & Wiki      & Caltech7   & USPS\\ \hline
 LMSC    &  6.65  & 14.09  & 18.18  & 1681.93   & 11.98     &5294.40 \\ \hline
 MVSC    &  21.98 & 46.85  & 255.51 & 6881.54   & 247.44  & 62584.69\\\hline
 CSMSC   &  1.22  & 1.03   & 1.56   & 28.67     & 1.79    & 506.52\\\hline
 LTMSC   &  6.43  & 14.15  & 28.47  & 1487.29   & 21.11   & 4401.68\\\hline
 MLRSSC   &  -  & -  & - & -  & -   & -\\\hline
 LCRSR   &  \textbf{0.11}  & \textbf{0.12}   & \textbf{0.44}   & \textbf{4.73}      & \textbf{0.44}    &\textbf{ 78.34}\\\hline
 \end{tabular}
\end{table}

\section{Conclusions}\label{sec_summary}
With the goal of more accurate and faster multi-view subspace clustering, in this paper, the LCRSR method is proposed based on the latent representation assumption.
By integrating complete information from multiple views, LCRSR is able to directly recover both the latent complete row space and the sparse errors existing in the original multi-view data.
LCRSR is more scalable to large-scale datasets since it technically avoids the computationally expensive graph construction procedure and can be solved efficiently.
The effectiveness and efficiency of LCRSR are demonstrated by extensive experiments on various multi-view datasets.


\bibliographystyle{IEEEtran}

\ifCLASSOPTIONcaptionsoff
  \newpage
\fi

\end{document}